\def\environment{\mathcal{E}}
\def\Regret{{\rm Regret}}
\def\KL{\mathbf{d}_{\mathrm{KL}}}
\def\Hel{\mathbf{d}_{\mathrm{H}}}
\def\E{\mathbb{E}}
\def\I{\mathbb{I}}
\def\H{\mathbb{H}}
\newcommand{\X}{\mathcal{X}}
\newcommand{\Y}{\mathcal{Y}}
\newcommand{\A}{\mathcal{A}}
\newcommand{\Prob}{\mathbb{P}}
\def\R{\mathbb{R}}
\newtheorem{theorem}{Theorem}
\newtheorem{corollary}{Corollary}
\newtheorem{lemma}{Lemma}
\newtheorem{fact}{Fact}
\title{An Analysis of Ensemble Sampling}
\author{
Chao Qin\\
Columbia University\\
\texttt{cqin22@gsb.columbia.edu}
\And
Zheng Wen\;
Xiuyuan Lu\;
Benjamin Van Roy\\
DeepMind\\
\texttt{\{zhengwen,lxlu,benvanroy\}@google.com}
}
\begin{document}

\maketitle

\begin{abstract}
  Ensemble sampling serves as a practical approximation to Thompson sampling when maintaining an exact posterior distribution over model parameters is computationally intractable.  In this paper, we establish a regret bound that ensures desirable behavior when ensemble sampling is applied to the linear bandit problem.  This represents the first rigorous regret analysis of ensemble sampling and is made possible by leveraging information-theoretic concepts and novel analytic techniques that may prove useful beyond the scope of this paper.
\end{abstract}

\section{Introduction}

Thompson sampling (TS) is a popular heuristic for balancing between exploration and exploitation in bandit learning.  In its basic form, TS maintains a posterior distribution over model parameters.  When the prior distribution and likelihood function exhibit conjugacy properties, the posterior distribution can be maintained through computationally tractable Bayesian inference.  However, many practical contexts call for more complex models for which exact Bayesian inference becomes computationally intractable.  For such contexts, ensemble sampling (ES) can serve as a practical approximation to TS \citep{NIPS2017_49ad23d1}.

Instead of the posterior distribution, ES maintains an ensemble of statistically plausible models that can be updated in an efficient incremental manner.  The corresponding discrete distribution represents an approximation to the posterior distribution.  At the start of each timestep, a model is sampled uniformly from the ensemble and an action is selected to optimize expected reward with respect to the sampled model.  Each model is initially sampled from the prior distribution and evolves through an updating process that adapts to observations and random perturbations.

While a growing literature \citep{NIPS2016_8d8818c8,TSTutorial,osband2018randomized,lu2018efficient,osband2019deep,Dwaracherla2020Hypermodels,osband2022neural} presents applications of ES, there has been no rigorous theory that ensures desirable behavior similar to that enjoyed by TS.  A regret bound for ES applied to linear-Gaussian bandits is provided by \citet{NIPS2017_49ad23d1}, but a flaw in the associated analysis has brought this result into question.  
Since the publication of that paper, several researchers have tried to remedy the analysis or otherwise establish performance guarantees for ES, but these efforts have gone without success.

We offer in this paper the first rigorous regret analysis of ES.  Like \citet{NIPS2017_49ad23d1}, we study ES applied to linear-Gaussian bandits.  This serves as a simple sanity check for the approach.  We establish a Bayesian regret bound (Theorem~\ref{thm:ES regret}) that consists of two terms. The first term is identical to a Bayesian regret bound established for TS.
The second term accounts for posterior mismatch and can be made arbitrarily small by increasing the ensemble size.  Our analysis leverages information-theoretic concepts and novel analytic techniques that may prove useful beyond the scope of this paper.

As a stepping stone in our analysis, we establish a general Bayesian regret bound that applies to {\it any} learning algorithm for linear bandits (Theorem~\ref{thm:ATS regret}).  In particular, this regret bound is based on the Hellinger distance between the action-selection distribution specified by the learning algorithm and the posterior distribution of the optimal action at each timestep.  We believe that this regret bound is of independent interest and might be useful in analyzing other learning algorithms, such as other approximate TS algorithms.

Note that for linear-Gaussian bandits considered in this paper, Kalman filtering offers a tractable means to exact inference and, as such, an approximation method like ES is not required.  However, it is worth mentioning that the analysis in this paper also offers insight into more complex models that call for approximate inference.  For example, consider a neural network with very wide hidden layers.  A recent research thrust (e.g., \citet{jacot2018neural, lee2017deep}) highlights that, under suitable technical conditions, training such a neural network via stochastic gradient descent (SGD) approximates Gaussian process inference. Our result extends directly to finite-armed Gaussian process bandits.  This suggests that ES applied with neural networks that are trained via SGD can be effective in complex bandit problems, though formalizing this extension requires significant technical work, which we leave for future research.

The remainder of this paper is organized as follows: Section~\ref{sec:related_work} reviews related work, and Section~\ref{sec:problem_formulation} formulates the considered linear bandit problem and describes the ensemble sampling algorithm. Section~\ref{sec:regret_bound} presents the main result of this paper: a Bayesian regret bound for ES applied to linear bandits.  Before diving into the analysis, we define some notation used in this paper in Section~\ref{sec:notation}.  Then we motivate and derive a general regret bound in Section~\ref{sec:general_bound}, and apply it to analyzing ES in Section~\ref{sec:es_proof}. Finally, we conclude the paper and discuss future directions in Section~\ref{sec:conclusion}.

\section{Related work}
\label{sec:related_work}

Thompson sampling \citep{Thompson1933, chapelle2011empirical, TSTutorial} is a commonly used heuristic for balancing exploration and exploitation in bandits \citep{lattimore2020bandit} and reinforcement learning \citep{sutton2018reinforcement}. However, the basic form of Thompson sampling can be computationally intractable unless the prior distribution and likelihood function exhibit conjugacy properties. To overcome this computational challenge, variants of approximate versions of TS have been developed (see Chapter 5 of \citet{TSTutorial}), including approximate TS using Laplace approximation \citep{chapelle2011empirical}, bootstrapping \citep{eckles2019, kveton2019garbage}, variational inference \citep{pmlr-v84-urteaga18a, pmlr-v119-yu20b}, Markov chain Monte Carlo \citep{casella1992explaining, roberts1996exponential}, hypermodels \citep{Dwaracherla2020Hypermodels}, and optimal transport \citep{zhang2019scalable}.

Ensemble sampling is an approximate version of Thompson sampling, formally proposed by \citet{NIPS2017_49ad23d1}.
It has been one of the most popular methods in approximate Bayesian inference and has wide applications in sequential decision making.
For example, variations of ES have been applied to (deep) reinforcement learning \citep{NIPS2016_8d8818c8, osband2018randomized, osband2019deep}, online recommendation \citep{lu2018efficient, hao2020low, zhu2021deep},
multi-agent reinforcement learning \citep{pmlr-v80-dimakopoulou18a}, behavioral sciences \citep{eckles2019}, and marketing strategies \citep{yang2020targeting}.
Recently, \citet{osband2022neural} has developed a testbed to compare and rank agents designed for uncertainty modeling, including variants of ensemble agents.

A lot of work has attempted to analyze ensemble sampling, but none of them has been successful. 
In their original paper, \citet{NIPS2017_49ad23d1} try to analyze ES in linear-Gaussian bandits, but there is a mistake in the final step of the analysis where they compare the regret of ES to that of TS.
In particular, in the proof of Lemma 8, the hypothetical actions selected by TS are defined on the histories of ES, instead of TS. Therefore, the cumulative regret incurred by these hypothetical TS actions is not equal to the cumulative regret of applying TS throughout all timesteps, while the paper incorrectly claims that they are equal.

In addition to \citet{NIPS2017_49ad23d1}, \citet{phan2019thompson} aims to analyze general approximate TS with approximation errors measured in $\alpha$-divergence for $K$-armed bandit. To demonstrate the main result, \citet{phan2019thompson} uses ES as an example, but also inherits the technical flaw in \citet{NIPS2017_49ad23d1}. In addition, though \citet{phan2019thompson} shows that the approximate version of TS with uniform sampling achieves sublinear regret, this result mainly benefits from this forced exploration. 

There are also papers aiming to analyze bandit algorithms with approximate inference.
A recent paper \citep{huang2022generalized} studies an upper-confidence-bound (UCB) type algorithm for bandit problems with approximate inference. Another recent paper \citep{ash2021anti} proposes and analyzes an ensemble based UCB algorithm for bandit problems.
\citet{pmlr-v115-kveton20a} studies linear bandits in the frequentist setting and provides a general regret bound that is tailored to analyzing randomized algorithms with good concentration and anti-concentration properties, but it could not be directly applied to analyzing ES due to ES's discrete and incremental update nature.

\section{Preliminaries}
\label{sec:problem_formulation}

We begin by introducing a formulation of the linear bandit and an ensemble sampling algorithm suited for this environment.

\subsection{Linear bandit}
\label{sec:bandit}
The linear bandit \citep{abbasi2011improved, lattimore2020bandit} has received much attention in the bandit learning literature.  It serves as a simple didactic environment, often used to sanity-check agent designs.  In this spirit, we will analyze ES as applied to the linear bandit. 

We consider the linear bandit with a Gaussian prior distribution and likelihood function. An instance is characterized by a tuple $\environment = (\A, \mu_0, \Sigma_0, \sigma^2)$, where $\A \subset \R^d$ is a finite action set, $\mu_0$ and $\Sigma_0$ are the prior mean vector and covariance matrix, and $\sigma^2$ is the noise variance. Let $K = |\A|$ denote the cardinality of the action set. Note that actions are vectors of dimension $d$.  The prior distribution over the {\it unknown} coefficient vector $\theta \in \R^d$, $\Prob(\theta \in \cdot)$, is multivariate Gaussian $N(\mu_0, \Sigma_0)$.  Rewards are generated by a random sequence $(R_t: t = 1,2,\ldots)$ of $K$-dimensional vectors, which is i.i.d. conditioned on $\theta$.  In particular, for $t=1,2\ldots,$
\[
R_{t,a} = a^\top \theta + W_{t,a} \quad\forall a\in\A,
\]
where $W_t = (W_{t,a})_{a\in\A}$ is a $K$-dimensional vector with each element distributed as $N(0,\sigma^2)$.

An agent interacts with the linear bandit as follows: at timestep $t=0,1,\ldots$, the agent executes an action $A_t$; then it observes a reward $R_{t+1, A_t}$. Note that only the reward associated with the executed action $A_t$ is observed through what is termed \emph{bandit feedback}.  The agent's experience through timestep $t$ is encoded by a history $H_t = (A_0,R_{1,A_0},\ldots,A_{t-1},R_{t, A_{t-1}})$.  The agent's objective is to maximize expected reward over some duration $T$:
\begin{equation*}
\sum_{t=0}^{T-1} \E \big [ R_{t+1, A_t}\big ].
\end{equation*}
This is equivalent to minimizing the Bayesian regret: 
\begin{equation*}
\Regret(T) = \sum_{t=0}^{T-1} \E[R_{t+1,A_*} - R_{t+1,A_t}]
\end{equation*}
where $A_* \sim \mathrm{unif}\left\{\arg\max_{a\in\A} a^\top\theta\right\}$.  The expectations in both equations integrate over random reward realizations, algorithmic randomness, and the coefficient vector $\theta$.  Note that $A_*$ is random since it depends on random $\theta$ and the uniform sampling among maximizers (which breaks ties).

\subsection{Ensemble sampling}

In the absence of conjugacy properties that enable efficient Bayesian inference, TS in its exact form becomes computationally infeasible.  ES serves as a practical approximation to TS \citep{NIPS2017_49ad23d1}, often suitable for such contexts.
The key idea behind ES is to maintain an ensemble of statistically plausible models that can be updated in an efficient incremental manner and to treat the discrete distribution represented by this ensemble of models as an approximation to the posterior distribution. 
The algorithm begins by sampling $M$ models from the prior distribution, where $M$ is a hyperparameter.  At the start of each timestep, a model is sampled uniformly from the ensemble and an action is selected to optimize the immediate expected reward with respect to the sampled model.  Then, each model is updated incrementally based on the observed reward and a random perturbation.  Algorithm \ref{alg:ES} presents the associated pseudo-code for ES.

\begin{figure}
\begin{centering}
\begin{minipage}{0.50\textwidth}
\begin{algorithm}[H]
\caption{Ensemble Sampling}\label{alg:ES}
\begin{algorithmic}[1]
\STATE \textbf{Input}: $M$ and $\Prob(\theta \in \cdot)$
\STATE \textbf{Sample}: $\tilde{\theta}_{0,1}, \ldots, \tilde{\theta}_{0,M} \sim \Prob(\theta \in \cdot)$
\FOR{$t = 0, 1, \ldots$}
\STATE \textbf{Sample}: $m_t \sim \mathrm{unif}\left\{1,\ldots,M\right\}$
\STATE \textbf{Execute}: $A_t \sim \mathrm{unif}\left\{\underset{a \in \A}{\arg\max}\,\, a^\top\tilde{\theta}_{t,m_t} \right\}$
\STATE \textbf{Observe}: $R_{t+1,A_t}$
\STATE \textbf{Update}: $\tilde{\theta}_{t, m} \longrightarrow \tilde{\theta}_{t+1, m} \quad \forall m\in[M]$
\ENDFOR
\end{algorithmic}
\end{algorithm}
\end{minipage}

\end{centering}
\hspace{0.2cm}
\end{figure}

For Gaussian prior $N(\mu_0, \Sigma_0)$, by conjugacy, the posterior distribution is also Gaussian  with covariance matrix and mean vector updated as follows,
\begin{equation}
\label{eqn:gaussian posterior update}
\Sigma_{t+1} = \left(\Sigma_t^{-1} + \frac{1}{\sigma^2} A_t A_t^\top\right)^{-1}  \qquad \text{and} \qquad
\mu_{t+1} =  \Sigma_{t+1} \left(\Sigma_t^{-1} \mu_{t} + \frac{R_{t+1,A_t}} {\sigma^2} A_t \right).
\end{equation}
While this highlights conjugacy properties of a sort that allow for efficient Bayesian inference and thus tractable implementation of exact TS, the linear bandit with a Gaussian prior distribution and likelihood function serves as a useful context for studying the behavior of ES in relation to TS.  Like \citep{NIPS2017_49ad23d1}, we consider a version of ES that updates each $m$-th model according to
\begin{equation}
\label{eqn:update M models}
\tilde{\theta}_{t+1,m} = \Sigma_{t+1} \left(\Sigma_t^{-1} \tilde{\theta}_{t,m} + \frac{R_{t+1,A_t} + \tilde{W}_{t+1,m}}{\sigma^2} A_t \right)
\end{equation}
where $\tilde{W}_{t+1} = (\tilde{W}_{t+1,m})_{m\in[M]}$ is an $M$-dimensional vector with each element distributed as $N(0,\sigma^2)$ and $[M]=\{1,\ldots,M\}$.

\section{A regret bound for ensemble sampling}
\label{sec:regret_bound}

\noindent
In this section, we establish the following regret bound for ES (Algorithm~\ref{alg:ES}) when it is applied to linear bandits described in Section~\ref{sec:bandit}:

\begin{theorem}\label{thm:ES regret}
Under ensemble sampling, 
\begin{align*}
\Regret(T) &\leq \underbrace{\iota\sqrt{dT\H(A_*)}}_{(a)} + \underbrace{\eta T \sqrt{\frac{ K\log(6TM)}{M}}}_{(b)}
\end{align*}
where $\H(A_*)$ is the Shannon entropy of the optimal action $A_*$ under the prior, and
\begin{equation}\label{eqn:iota and eta}
    \iota \triangleq \sqrt{2\left(\max_{a\in\A}a^\top\Sigma_0a + \sigma^2\right)}
\quad\text{and}\quad
\eta \triangleq 2\sqrt{\E\left[\max_{a\in \A} \left(a^\top \theta\right)^2\right] + \sigma^2}.
\end{equation}
\end{theorem}
Note that $\iota=O(1)$ and  Lemma \ref{lem:bound on eta under Gaussian prior} in Appendix \ref{sec:technical} shows $\eta = O (\sqrt{\min\{d,\log K\}})$, so
\[
\Regret(T) \leq O\left(\sqrt{d T \H(A_*)} + T \sqrt{\frac{\min\{d, \log K\} K\log(6TM)}{M}} \right).
\]

Theorem \ref{thm:ES regret} represents the first rigorously proved regret bound that ensures some degree of robustness in application of ES to a nontrivial bandit problem. One limitation is that the second term $(b)$ in our regret bound depends on the action set size $K$ since our current analysis uses the method of (finite) types \citep{cover2006elements}, which might not be tight for linear bandit. We conjecture that the second term $(b)$ can be improved to be only dependent  of the dimension $d$.  Another limitation is that the result applies only to the linear bandit with Gaussian prior distribution and likelihood function, which admits tractable application of exact TS.  Nevertheless, this result represents a beginning of the theory of ES, suggesting that the approach is sound and the analysis might be extended to broader problem classes.  

\paragraph{Comparison with the regret bound for TS}
The regret bound in Theorem~\ref{thm:ES regret} consists of two terms. 
The first term $(a)$ is exactly the regret bound achieved by TS with exact posterior \citep{JMLR:v17:14-087}. Since the action set size is $K$, the entropy of the optimal action $\H(A_*)\leq \log K$, and as discussed in \citet{JMLR:v17:14-087}, when the prior is informative, $\H(A_*)\ll\log K$. Therefore, the first term $(a)$ is order optimal and improves upon the worst-case regret bound achieved by other algorithms, e.g., upper confidence bound type algorithms. On the other hand, the second term $(b)$ is an incremental term and accounts for posterior mismatch. Note that  as the ensemble size $M$ approaches infinity, the second term $(b)$ converges to zero and our regret bound for ES reduces to that for TS with exact posterior. Moreover, as long as the ensemble size $M$ reaches ${KT}/{d}$, our regret bound for ES has the same order as that for TS in terms of $T$ and $d$ (up to logarithmic factors). 

\paragraph{Comparison with the regret bound in \citet{NIPS2017_49ad23d1}}  Theorem~\ref{thm:ES regret} presents a Bayesian regret bound for the prior $N(\mu_0,\Sigma_0)$ over $\theta$,
while \citet{NIPS2017_49ad23d1} studies ES in the frequentist setting where unknown $\theta$ is fixed. They try to upper bound the frequentist regret of ES by that of TS plus some cumulative approximation error due to posterior mismatch, similar to the second term $(b)$ in our Bayesian regret bound. Although there are some technical issues in their analysis, as is discussed in Section \ref{sec:related_work}, we compare both regret bounds.
The frequentist regret bound in Theorem 3 of \citet{NIPS2017_49ad23d1} suggests that for any $\epsilon > 0$, when the ensemble size $M=\tilde{\Theta}({K}/{\epsilon^2})$ ($\tilde{\Theta}$ hides logarithmic factors), their cumulative approximation error is $\epsilon\Delta(\theta)T$ where $\Delta(\theta) = \max_{a\in\A} a^\top\theta - \min_{a\in\A} a^\top\theta$.
To make the second term $(b)$ in our regret bound scale as $\tilde{O} ( \epsilon T )$ ($\tilde{O}$ hides logarithmic factors), we need the ensemble size $M$ to be
$\tilde{\Theta}({K}/{\epsilon^2})$ as well.
We conjecture the required ensemble size can be improved to be only dependent of the dimension $d$.

\section{Information measures}
\label{sec:notation}
Before proceeding, we present several information measures including Kullback–Leibler divergence, Hellinger distance, Shannon entropy and mutual information.
\subsection{Kullback–Leibler divergence and Hellinger distance}
For two probability distributions $P$ and $Q$, 
the Kullback–Leibler (KL) divergence between $P$ and $Q$ is defined as
\[
\KL(P\|Q) \triangleq \int\log\left(\frac{\mathrm{d}P}{\mathrm{d}Q}\right)\mathrm{d}P
\]
if $P$ is absolutely continuous with respect to $Q$;
otherwise, define $\KL(P\|Q) = +\infty$. Note that $\KL(P\|Q) \neq \KL(Q\| P)$ in general.
The Hellinger distance between $P$ and $Q$ is defined as
\[
\Hel(P\|Q) \triangleq \sqrt{ \int\left(\sqrt{\mathrm{d}P} - \sqrt{\mathrm{d}Q}\right)^2 }
\]
which is symmetric in $P$ and $Q$. 
In the special case of discrete distributions $P = (p_1,\ldots,p_n)$ and $Q = (q_1,\ldots,q_n)$,
the KL divergence and Hellinger distance between $P$ and $Q$ can be written as 
\[
\KL(P\|Q) = \sum_{i\in[n]}p_i\log\left( p_i /q_i\right)
\quad\text{and}\quad
\Hel(P\|Q) = \sqrt{ \sum_{i\in[n]}\left(\sqrt{p_i}-\sqrt{q_i}\right)^2 }.
\]

\subsection{Shannon entropy and mutual information}
Consider a random variable $X$ that takes values in a countable set $\X$.
The {\it Shannon entropy} of $X$ is
\begin{equation*}
\H(X) \triangleq - \sum_{x\in\mathcal{X}} \Prob(X=x) \log \Prob(X=x) = \E_X[-\log\Prob(X\in\cdot)]
\end{equation*}
with a convention that $0\log 0 = 0$.

For ease of exposition, let $Y$ be another random variable that takes values in a countable set $\Y$. For $y\in\Y$ such that $\Prob(Y=y)>0$, 
the {\it realized conditional entropy} of $X$ given $Y=y$ is
\[ 
\H(X|Y=y ) \triangleq - \sum_{x \in \mathcal{X}} \Prob\left(X=x | Y=y \right)\log \Prob(X=x| Y=y) = \E_X[-\log\Prob(X\in\cdot|Y)|Y=y],
\]
and the {\it conditional entropy} of $X$ given $Y$ is
\[
\textstyle
\H(X|Y ) \triangleq \E_{Y}\left[- \sum_{x \in \mathcal{X}} \Prob\left(X=x | Y\right)\log \Prob(X=x| Y) \right] =  \E_Y[\H(X|Y=Y)].
\]
Note that the above definitions can be extended to general random variables.

The {\it mutual information} between $X$ and $Y$ is
\[
\textstyle
\I(X ; Y) \triangleq \KL\left( \Prob( X\in \cdot, Y\in \cdot)   \, || \, \Prob( X \in \cdot ) \Prob( Y\in \cdot )  \right) = \H(X) - \H(X|Y),
\]
and the {\it conditional mutual information} between $X$ and $Y$ given a general random variable $Z$ is 
\[
\textstyle
\I( X; Y | Z)  \triangleq \E_{Z}\left[ \KL\left( \Prob( X\in \cdot,Y\in \cdot  | Z  ) \, ||\, \Prob( X\in \cdot | Z  )\Prob(Y\in \cdot | Z )  \right)\right] = \H(X|Z)-\H(X|Y,Z).
\]

\section{A general regret bound}
\label{sec:general_bound}

In this section, we derive a general regret bound for \emph{any} learning algorithm (Theorem~\ref{thm:ATS regret}), based on the Hellinger distance between the (conditional) action-selection distribution specified by the algorithm and the posterior distribution of the optimal action $A_*$, 
and apply it to analyzing the regret bound for ES in Theorem \ref{thm:ES regret}.
We believe that our regret bound is of independent interest and might be used to analyze other bandit algorithms, such as variants of approximate TS algorithms, in the future.

To simplify the exposition, we first define some shorthand notation to simplify the exposition of this paper. First, we use subscript $t$ to denote conditioning on $H_t$, specifically, we define
\[
\textstyle
\Prob_t(\cdot) \triangleq \Prob(\cdot|H_t) 
\quad\text{and}\quad
\E_t[\cdot] \triangleq \E[\cdot|H_t].
\]
In addition,
we define two probability 
distributions $\overline{p}_t$ and $p_t$ over action set $\A$ as
\[
\textstyle
\overline{p}_t(\cdot) \triangleq \Prob_t(A_t = \cdot)
 \quad \text{and} \quad 
p_t(\cdot) \triangleq \Prob_t(A_* = \cdot).
\]
Conditional on the history $H_t$, $\overline{p}_t$ is the sampling distribution of the action $A_t$ and $p_t$ is the posterior distribution of the optimal action $A_*$. Both $p_t$ and $\overline{p}_t$ are specific to the algorithm.
Note that given the history $H_t$, $\overline{p}_t$ and $p_t$ are the same under TS, but they are not under other algorithms. For approximate TS algorithms, if the approximation is accurate, $p_t$ is close to $\overline{p}_t$. Now we introduce a general regret bound for any learning algorithm:
\begin{theorem}\label{thm:ATS regret}
Under any learning algorithm, 
\begin{equation*}
   \Regret(T) \leq  \iota\sqrt{dT\H(A_*)} + \eta \sum_{t=0}^{T-1} \sqrt{\E\left[\Hel^2(
   \overline{p}_t \| p_t)\right]}
\end{equation*}
where $\iota$ and $\eta$ are defined in Equation \eqref{eqn:iota and eta}.
\end{theorem}

The regret bound in Theorem \ref{thm:ATS regret} holds for any learning algorithm. Note $\H(A_*)$ is the entropy of the optimal action $A_*$ and we always have $\H(A_*)\leq \log K$. The first term matches the regret bound for TS derived in \citet{JMLR:v17:14-087}.
In the second term, 
$
\sum_{t=0}^{T-1} \sqrt{\E\left[\Hel^2(\overline{p}_t\| {p}_t)\right]}
$
quantifies the cumulative distance between the sampling distribution of the action $A_t$ and the posterior distribution the optimal action $A_*$. Under TS with exact posterior, $\Hel^2(\overline{p}_t\| {p}_t)=0$
due to the posterior matching property of TS, i.e., $\overline{p}_t = p_t$.
Moreover, for approximate TS algorithms, if the approximation is accurate,
$\Hel^2(\overline{p}_t\| {p}_t)$ should be small.

\paragraph{Regret bounds in terms of KL divergence between $\overline{p}_t$ and ${p}_t$}
It is sometimes more convenient to analyze the KL divergence between $\overline{p}_t$ and $p_t$.  
Recall that the squared Hellinger distance is symmetric and upper bound by the KL divergence.
\begin{fact}[\rm Lemma 2.4 in \citet{Tsybakov2009}]\label{fact: distance relationship}
For two probability distributions $P$ and $Q$, we have
\[
\textstyle
\Hel^2(P\|Q) \leq \min \{ \KL(P\|Q),\, \KL(Q\|P) \}.
\]
\end{fact}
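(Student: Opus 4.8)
The plan is to prove the sharper one-sided bound $\Hel^2(P\|Q) \le \KL(P\|Q)$ and then recover the full claim by exploiting the symmetry of the Hellinger distance. First I would dispose of the degenerate case: if $P$ is not absolutely continuous with respect to $Q$, then $\KL(P\|Q) = +\infty$ by convention and the inequality holds trivially, so I may assume $P \ll Q$ and write $f \triangleq \mathrm{d}P/\mathrm{d}Q$ for the Radon--Nikodym derivative, which satisfies $\int f\,\mathrm{d}Q = 1$ since $P$ is a probability measure.

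The core of the argument rests on rewriting the squared Hellinger distance through the Bhattacharyya affinity $A \triangleq \int \sqrt{\mathrm{d}P\,\mathrm{d}Q} = \int \sqrt{f}\,\mathrm{d}Q$. Expanding the square and using $\int\mathrm{d}P = \int\mathrm{d}Q = 1$ gives
\[
\Hel^2(P\|Q) = \int \mathrm{d}P - 2\int\sqrt{\mathrm{d}P\,\mathrm{d}Q} + \int\mathrm{d}Q = 2 - 2A,
\]
so it suffices to show $2 - 2A \le \KL(P\|Q)$, equivalently $-\KL(P\|Q) \le 2(A-1)$.

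Next I would bound $-\KL(P\|Q)$ from above by Jensen's inequality. Writing $-\KL(P\|Q) = \int \log(1/f)\,\mathrm{d}P = 2\int \log\!\big(\sqrt{1/f}\big)\,\mathrm{d}P$ and invoking the concavity of the logarithm against the probability measure $P$, I obtain
\[
-\KL(P\|Q) \;\le\; 2\log\!\left(\int \sqrt{1/f}\,\mathrm{d}P\right) \;=\; 2\log\!\left(\int \sqrt{f}\,\mathrm{d}Q\right) \;=\; 2\log A,
\]
where the middle equality re-expresses the affinity with respect to $Q$ via $\mathrm{d}P = f\,\mathrm{d}Q$. The elementary inequality $\log x \le x-1$ then yields $2\log A \le 2(A-1)$, which is exactly the target bound; combining the two displays gives $\Hel^2(P\|Q) \le \KL(P\|Q)$. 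Since $\Hel$ is symmetric, $\Hel^2(P\|Q) = \Hel^2(Q\|P)$, so repeating the argument with $P$ and $Q$ interchanged gives $\Hel^2(P\|Q) \le \KL(Q\|P)$, and taking the minimum of the two bounds proves the Fact.

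Each step is short, so I do not anticipate a serious obstacle; the only points demanding care are measure-theoretic. I must ensure that the affinity and both divergences are evaluated against a common reference (here $Q$, or more safely a dominating measure such as $(P+Q)/2$), that the set $\{f=0\}$ carries no $P$-mass so the manipulations are legitimate, and above all that the direction of Jensen's inequality is tracked correctly, since $\log$ is concave and it is $-\KL$, not $\KL$, that is bounded from above. As a cross-check I would keep in reserve the Jensen-free route that establishes the pointwise inequality $(\sqrt{x}-1)^2 \le x\log x - (x-1)$ for $x \ge 0$ — equivalently $x\log x - (x-1) - (\sqrt{x}-1)^2 = 2\sqrt{x}\,\big(\sqrt{x}\log\sqrt{x} - \sqrt{x} + 1\big) \ge 0$ after substitution — and then integrates it against $Q$ using $\int (f-1)\,\mathrm{d}Q = 0$.
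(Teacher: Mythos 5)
Your proof is correct. The paper itself gives no proof of this fact---it is imported verbatim as Lemma 2.4 of \citet{Tsybakov2009}---and your argument (write $\Hel^2(P\|Q)=2-2A$ via the affinity $A=\int\sqrt{\mathrm{d}P\,\mathrm{d}Q}$, apply Jensen's inequality for the concave logarithm under $P$ to get $-\KL(P\|Q)\leq 2\log A$, finish with $\log x\leq x-1$, and use the symmetry of $\Hel$ to obtain the minimum over both KL directions) is essentially the standard proof from that reference, with the degenerate case $P\not\ll Q$ and the null set $\{\mathrm{d}P/\mathrm{d}Q=0\}$ handled correctly; your reserve pointwise inequality $(\sqrt{x}-1)^2\leq x\log x-(x-1)$ integrated against $Q$ is an equally valid alternative.
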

We obtain the regret upper bound with the approximation error in terms of two forms of KL divergence between $\overline{p}_t$ and $p_t$.
\begin{corollary}
\label{cor:ATS regret}
Under any learning algorithm,
\begin{equation*}
\Regret(T) \leq \iota\sqrt{dT\H(A_*)} +\eta
\sum_{t=0}^{T-1} \sqrt{\E\left[\min\left\{\KL(\overline{p}_t \| p_t),\KL(p_t\| \overline{p}_t ) \right\}\right]}.
\end{equation*}
\end{corollary}
We will show in Section \ref{sec:es_proof} that for ensemble sampling, $\KL(\overline{p}_t\| p_t )$ can be bounded in terms of the ensemble size $M$.  It is noteworthy that in some other problems, it is more convenient to bound the other form $\KL(p_t \| \overline{p}_t )$. One such example is the regret bound developed in \citet{wen2022predictions}.

\subsection{Proof of Theorem \ref{thm:ATS regret}}
For $t=0,1,\ldots,$ we define \emph{per-timestep  main regret} $G_t$ and \emph{per-timestep  approximation error} $D_t$:
\begin{equation*}
G_t \triangleq \sum_{a\in\A} \sqrt{\overline{p}_t(a)p_t(a)}\left(\E_t[R_{t+1,a}|A_* = a] -\E_t[R_{t+1,a}]\right),
\end{equation*}
\begin{equation*}
D_t \triangleq   \sum_{a\in\A}\left(\sqrt{p_t(a)}-\sqrt{\overline{p}_t(a)}\right)\left(\sqrt{p_t(a)}\E_t[R_{t+1,a}|A_* = a]
     +\sqrt{\overline{p}_t(a)}\E_t[R_{t+1,a}]\right).
\end{equation*}

\begin{lemma}\label{lem: rewrite regret}
The cumulative regret can be written as follows,
\[
\Regret(T) = \sum_{t=0}^{T-1} \E[G_{t} + D_{t}].
\]
\end{lemma}
\begin{proof}
By the tower property of conditional expectation,
\[
\textstyle
\Regret(T)  =\sum_{t=0}^{T-1} \E\left[\E_{t}[R_{t+1,A_*} - R_{t+1,A_{t}}]\right]
\]
where
\begin{align*}
\textstyle
    \E_t[R_{t+1,A_*} - R_{t+1,A_t}]
    =& \sum_{a\in\A}p_t(a)\E_t[R_{t+1,a}|A_* = a] -\sum_{a\in\A}\overline{p}_t(a)\E_t[R_{t+1,a}|A_t = a] \\
    =& \sum_{a\in\A}p_t(a)\E_t[R_{t+1,a}|A_* = a] -\sum_{a\in\A}\overline{p}_t(a)\E_t[R_{t+1,a}] = G_t + D_t
\end{align*}
where the second equality uses that $R_{t+1} = (R_{t+1,a})_{a\in\A}$ is independent of the action $A_t$.
\end{proof}

Next we upper bound $\sum_{t=0}^{T-1} \E[G_{t}]$ and $\sum_{t=0}^{T-1} \E[D_{t}]$, respectively. We first analyze per-timestep main regret $G_t$. The following lemma generalizes Proposition 5 and Corollary 1 in \citet{JMLR:v17:14-087} for the case such that $A_t$ and $A_*$ are not identically distributed given the history.
\begin{lemma}\label{lem: linear}
With $\iota$ defined in Equation \eqref{eqn:iota and eta}, for $t=0, 1\ldots,T-1,$
\[
G_t\leq \iota\sqrt{d \cdot \I_t(A_* ; (A_t, R_{t+1, A_t}))}
\]
where $\I_t(A_* ; (A_t, R_{t+1, A_t}))$ is the mutual information between the optimal action $A_*$ and action-reward pair $(A_t, R_{t+1, A_t})$  conditioning on a given history $H_t$ at timestep $t$.
\end{lemma}

\begin{proof}
Let the action set $\A = \left\{a_{1},\ldots,{a_K}\right\}$. Fix $t$ and define $M \in \mathbb{R}^{K \times K}$ by 
\[
M_{i,j} = \sqrt{\overline{p}_t(a_i) p_t(a_j)} \left( \E_t[R_{t+1, a_i} | A_*=a_j] -\E_t[R_{t+1,a_i}] \right) \quad \forall i,j \in [K]. 
\]
Then $G_t = \rm{Trace}(M)$.
By the proof of Proposition 2 in \citet{JMLR:v17:14-087},
\begin{align*}
\I_t(A_*; (A_t, R_{t+1,A_t}))
&= \I_t(A_*; {A}_{t}) + \I_t(A_*; R_{t+1,A_t}|A_t) \\
&= \I_t(A_*; R_{t+1,A_t}|A_t) \\
&= \sum_{i\in [K]}\overline{p}_t(a_i)\I_t(A_*; R_{t+1,A_t}|A_t = a_i) \\
&= \sum_{i\in [K]}\overline{p}_t(a_i)\I_t(A_*; R_{t+1,a_i}) \\
&= \sum_{i\in [K]}\overline{p}_t(a_i)\left(\sum_{j\in[K]}p_t(a_j)\KL\left(\Prob_t(R_{t+1,a_i}\in\cdot | A_* = a_j) \| \Prob_t(R_{t+1,a_i}\in\cdot)\right)  \right)\\
&= \sum_{i,j\in [K]}\overline{p}_t(a_i)p_t(a_j)\KL\left(\Prob_t(R_{t+1,a_i}\in\cdot | A_* = a_j) \| \Prob_t(R_{t+1,a_i}\in\cdot)\right)
\end{align*}
where the first equality uses the chain rule of mutual information (Fact \ref{fact: chain rule}); the second and fourth ones follow from that conditional on the history $H_t$, $A_t$ is jointly independent of $A_*$ and $R_{t+1} = (R_{t+1,a})_{a\in\A}$; the fifth equality uses the KL divergence form of mutual information (Fact \ref{fact: mutual information to KL}).

By the updating rule on covariance matrix in Equation \eqref{eqn:gaussian posterior update}, conditional on $H_t$,
$
\theta\sim N(\mu_t, \Sigma_t)
$
with $\Sigma_t\preceq\Sigma_0$.
Hence, for any $i\in[K]$,
\[
R_{t+1,a_i} = a_i^\top\theta + W_{t,a_i} \sim N\left(a_i^\top\mu_t, a_i^\top\Sigma_t a_i + \sigma^2\right)
\quad\text{where}\quad
a_i^\top\Sigma_t a_i \leq a_i^\top\Sigma_0 a_i \leq \max_{a\in\A} a^\top\Sigma_0 a.
\]
This implies that $R_{t+1,a_i}$ is sub-Gaussian with variance proxy 
$
(\max_{a\in\A} a^\top\Sigma_0 a + \sigma^2).
$
Then by Lemma \ref{lem: DME to DKL under sub--Gaussian noise}, we have
\begin{eqnarray*}
\I_t(A_*; (A_t, R_{t+1,A_t}))  
&\geq & \frac{\sum_{i,j\in[K]}\overline{p}_t(a_i) p_t(a_j) \left( \E_t[R_{t+1,a_{i}} | A_*=a_j] -\E_t[R_{t+1,a_i}]  \right)^2}{2\left(\max_{a\in\A} a^\top\Sigma_0 a + \sigma^2\right)}  \\
&=& \frac{\sum_{i,j\in[K]} M_{i,j}^2}{2\left(\max_{a\in\A} a^\top\Sigma_0 a + \sigma^2\right)}\\
&=& \frac{\| M \|_{\rm F}^2}{2\left(\max_{a\in\A} a^\top\Sigma_0 a + \sigma^2\right)}.
\end{eqnarray*}
Then by Fact \ref{fact: trace frobenius inequality},
\[
\frac{G_t^2}{\I_t(A_*; (A_t, R_{t+1,A_t})) } \leq \frac{ 2\left(\max_{a\in\A} a^\top\Sigma_0 a + \sigma^2\right)\rm{Trace}(M)^2}{\| M \|_{\rm F}^2} \leq 2\left(\max_{a\in\A} a^\top\Sigma_0 a + \sigma^2\right){\rm Rank}(M).
\]
Now we show that ${\rm Rank}(M) \leq d$. Define 
\[
\nu = \mathbb{E}_t\left[ \theta  \right]
\quad\text{and}\quad 
\nu_j = \mathbb{E}_t\left[ \theta|  A_*=a_j \right] \quad \forall j\in[K].
\]
Then for any $i,j\in[K]$,
\[
    M_{i,j} 
    = \sqrt{\overline{p}_t(a_i) p_t(a_j)} \left( \E_t\left[\theta^\top a_i| A_*=a_j\right] -\E_t\left[\theta^\top a_i\right] \right)
    = \sqrt{\overline{p}_t(a_i) p_t(a_j)}\left(\nu_{j} - \nu\right)^\top a_i,
\]
and thus
\[
M = \left[\begin{array}{c}
\sqrt{p_t(a_1)}\left(\nu_{1}-\nu \right)^{\top}\\
\vdots\\
\sqrt{p_t(a_K)}\left(\nu_{K}-\nu \right)^{\top}
\end{array}\right]\left[\begin{array}{cccc}
\sqrt{\overline{p}_t(a_1)}a_{1} & \cdots &  \sqrt{\overline{p}_t(a_K)}a_{K}\end{array}\right].
\]
Since $M$ is the product of one $K\times d$ matrix and the other $d\times K$ matrix, we have ${\rm Rank}(M) \leq d$.
\end{proof}

Now we are ready to bound $\sum_{t=0}^{T-1} \E[G_t]$ and derive the first term of the regret bound in Theorem \ref{thm:ATS regret} by following the analysis of Proposition 1 in \citet{JMLR:v17:14-087}. 
\begin{lemma}
\label{lem:sum of expected G_t}
With $\iota$ defined in Equation \eqref{eqn:iota and eta}, the following inequality holds:
\[
\sum_{t=0}^{T-1} \E[G_t] \leq 
\iota\sqrt{dT\H(A_*)}.
\]
\end{lemma}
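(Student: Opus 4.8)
The plan is to take the per-period bound from Lemma~\ref{lem: linear} in expectation and then chain the pieces together via Jensen's inequality, the Cauchy--Schwarz inequality, and the chain rule for mutual information, exactly in the spirit of the information-ratio analysis of \citet{JMLR:v17:14-087}. Starting from $G_t \leq \iota\sqrt{d\cdot \I_t(A_*;(A_t,R_{t+1,A_t}))}$, which holds with probability $1$, I would first take the expectation over the history and apply Jensen's inequality to the concave square root to pull the expectation inside:
\[
\E[G_t] \leq \iota\sqrt{d}\,\E\!\left[\sqrt{\I_t(A_*;(A_t,R_{t+1,A_t}))}\right] \leq \iota\sqrt{d}\,\sqrt{\E\!\left[\I_t(A_*;(A_t,R_{t+1,A_t}))\right]}.
\]
Here the inner expectation over the realization $H_t$ converts the conditional-on-realization quantity $\I_t$ into the conditional mutual information $\I(A_*;(A_t,R_{t+1,A_t})\mid H_t)$, by definition of conditional mutual information given in Section~\ref{sec:notation}.

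Next I would sum over $t=0,\ldots,T-1$ and apply the Cauchy--Schwarz inequality to the sum of square roots, using the vector of all-ones to produce the factor $\sqrt{T}$:
\[
\sum_{t=0}^{T-1}\E[G_t] \leq \iota\sqrt{d}\sum_{t=0}^{T-1}\sqrt{\I(A_*;(A_t,R_{t+1,A_t})\mid H_t)} \leq \iota\sqrt{dT}\,\sqrt{\sum_{t=0}^{T-1}\I(A_*;(A_t,R_{t+1,A_t})\mid H_t)}.
\]
The crucial step is then to recognize the sum inside the final square root as a telescoping information gain. Since $H_{t+1}=(H_t,A_t,R_{t+1,A_t})$, the chain rule for mutual information gives $\sum_{t=0}^{T-1}\I(A_*;(A_t,R_{t+1,A_t})\mid H_t)=\I(A_*;H_T)$, and because $A_*$ is a discrete random variable, $\I(A_*;H_T)=\H(A_*)-\H(A_*\mid H_T)\leq \H(A_*)$. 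Substituting this bound yields $\sum_{t=0}^{T-1}\E[G_t]\leq \iota\sqrt{dT\H(A_*)}$, as claimed.

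I expect the main obstacle to be the careful justification of the telescoping identity and the nonnegativity/boundedness facts it relies on, rather than any hard estimate. Concretely, I want to be sure that the chain rule applies cleanly given that the conditioning history $H_t$ grows by exactly the observed pair $(A_t,R_{t+1,A_t})$ at each step, so that the conditional mutual informations accumulate to $\I(A_*;H_T)$ with no leftover terms; and that $A_*$ being supported on the finite action set $\A$ guarantees $\I(A_*;H_T)\leq \H(A_*)<\infty$. Everything else is a routine application of Jensen and Cauchy--Schwarz, so once the chain-rule bookkeeping is verified the bound follows immediately.
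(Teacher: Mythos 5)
Your proof is correct and follows essentially the same route as the paper: Lemma~\ref{lem: linear}, converting $\E[\I_t]$ into the conditional mutual information, a Cauchy--Schwarz step producing $\sqrt{T}$, and the chain rule giving $\I(A_*;H_T)=\H(A_*)-\H(A_*\mid H_T)\leq\H(A_*)$. The only cosmetic difference is that you split the averaging into Jensen followed by Cauchy--Schwarz on the deterministic sum, while the paper absorbs both into a single Cauchy--Schwarz over the sum and the expectation jointly; the two manipulations are equivalent.
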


\begin{proof}
By Lemma \ref{lem: linear},
\begin{align*}
    \sum_{t=0}^{T-1} \E[G_t] 
    \leq \iota\sqrt{d}\sum_{t=0}^{T-1} \E\left[\sqrt{\I_t\left( A_*, (A_t; R_{t+1,A_t})\right)}\right]
     \leq& \iota\sqrt{dT\E\left[\sum_{t=0}^{T-1}\I_{t}\left( A_* ; (A_t, R_{t+1, A_t})\right)\right]}\\
     =& \iota\sqrt{dT\sum_{t=0}^{T-1}\I\left( A_* ; (A_t, R_{t+1, A_t}) | H_t\right)}\\
     =& \iota\sqrt{dT\left(\H(A_*)-\H(A_*| H_T)\right)}\\
     \leq&\iota\sqrt{dT\H(A_*)}
\end{align*}
where the second inequality applies the Cauchy-Schwarz inequality, and the last inequality uses the chain rule for mutual information (Fact \ref{fact: chain rule}) and the definition of mutual information.
\end{proof}

The next lemma controls the expected per-timestep approximation error $\E[D_t]$ under (sub-)Gaussian prior distribution and noises. Summing over timesteps completes the proof of Theorem \ref{thm:ATS regret}.
\begin{lemma}\label{lem:expected D_t}
With $\eta$ defined in Equation \eqref{eqn:iota and eta}, for $t=0,1,\ldots,T-1,$
\[
\E[D_t]\leq 
    \eta
    \sqrt{ \E\left[\Hel^2(\overline{p}_t\| p_t)\right] }.
\]
\end{lemma}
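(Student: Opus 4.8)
The plan is to peel $D_t$ apart with two applications of the Cauchy--Schwarz inequality: one inside the fixed history $H_t$ to extract the Hellinger factor, and one across the outer expectation to separate the distributional mismatch from the reward magnitudes. Viewing $D_t$ in \eqref{eqn:approximation term} as an inner product over $a\in\A$ of the vectors $\big(\sqrt{p_t(a)}-\sqrt{\overline{p}_t(a)}\big)_a$ and $\big(b_{t,a}\big)_a$, where $b_{t,a}\triangleq \sqrt{p_t(a)}\,\E_t[R_{t+1,a}\mid A_*=a]+\sqrt{\overline{p}_t(a)}\,\E_t[R_{t+1,a}]$, the Cauchy--Schwarz inequality together with the identity $\sum_{a}\big(\sqrt{p_t(a)}-\sqrt{\overline{p}_t(a)}\big)^2=\Hel^2(\overline{p}_t\|p_t)$ yields
\[
D_t \leq \Hel(\overline{p}_t\|p_t)\,\sqrt{\sum_{a\in\A} b_{t,a}^2}.
\]

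Next I would control $\sum_a b_{t,a}^2$. Applying the elementary bound $(x+y)^2\leq 2(x^2+y^2)$ termwise, then Jensen's inequality (conditionally) to pass from squared conditional means to conditional second moments, and using $R_{t+1,a}=a^\top\theta+W_{t+1,a}$ with $W_{t+1,a}$ independent of $(\theta,A_*,H_t)$, zero-mean and variance $\sigma^2$, each squared conditional expectation is bounded by $\E_t[(a^\top\theta)^2\mid A_*=a]+\sigma^2$ (respectively $\E_t[(a^\top\theta)^2]+\sigma^2$). The key move is then to bound $(a^\top\theta)^2\leq \max_{a'\in\A}(a'^\top\theta)^2$ and sum against the weights $p_t(a)$ and $\overline{p}_t(a)$, each of which sums to one: the law of total expectation over $A_*$ collapses $\sum_a p_t(a)\,\E_t[\max_{a'}(a'^\top\theta)^2\mid A_*=a]$ to $\E_t[\max_{a'}(a'^\top\theta)^2]$, and the unconditional sum collapses identically. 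This gives $\sum_a b_{t,a}^2\leq 4\big(\E_t[\max_{a'}(a'^\top\theta)^2]+\sigma^2\big)$, hence
\[
D_t \leq 2\,\Hel(\overline{p}_t\|p_t)\,\sqrt{\E_t\big[\textstyle\max_{a'}(a'^\top\theta)^2\big]+\sigma^2}.
\]

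Finally I would take the unconditional expectation and apply Cauchy--Schwarz once more across the outer $\E$ to separate the two random factors, then use the tower property $\E\big[\E_t[\,\cdot\,]\big]=\E[\,\cdot\,]$ to identify the second factor:
\[
\E[D_t]\leq \sqrt{\E\big[\Hel^2(\overline{p}_t\|p_t)\big]}\cdot 2\sqrt{\E\big[\textstyle\max_{a'}(a'^\top\theta)^2\big]+\sigma^2}=\eta\,\sqrt{\E\big[\Hel^2(\overline{p}_t\|p_t)\big]},
\]
which is the claim, with $\eta$ as in \eqref{eqn:eta}.

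The main obstacle, which the surrounding text flags, is the potential unboundedness of the rewards and prior: one cannot bound $\E_t[R_{t+1,a}\mid A_*=a]$ by a deterministic constant. The resolution is precisely the probability weighting built into $b_{t,a}$ — the factors $\sqrt{p_t(a)}$ and $\sqrt{\overline{p}_t(a)}$ are what make the $(x+y)^2\leq 2(x^2+y^2)$ split produce \emph{probability-weighted} second moments, so that the two tower-property collapses to the finite quantity $\E[\max_{a'}(a'^\top\theta)^2]$ go through. Extracting the clean constant $2$ (rather than a larger factor) hinges on applying the elementary inequality before, not after, passing to second moments, and on performing the conditional Cauchy--Schwarz step separately from the outer one.
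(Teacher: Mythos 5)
Your proof is correct and takes essentially the same route as the paper's: Cauchy--Schwarz over actions to extract the Hellinger factor, Jensen's inequality to pass to conditional second moments, a bound via $\max_{a}(a^\top\theta)^2$, and a second (outer) Cauchy--Schwarz plus the tower property to arrive at $\eta\sqrt{\E[\Hel^2(\overline{p}_t\|p_t)]}$. The only cosmetic differences are that the paper splits the inner product into two pieces (obtaining the factor $2$ as a sum of two square roots, and invoking independence of $A_t$ and $R_{t+1}$ to identify $\E_t[R_{t+1,A_t}^2]$) and defers the max bound until after the outer expectation, whereas you use $(x+y)^2\leq 2(x^2+y^2)$ and collapse to the max inside $\E_t$ -- both yield the same constant $\eta$.
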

\begin{proof}
Fix $t$. We write
\begin{align*}
    D_t
    =& \sum_{a\in\A}\left(\sqrt{p_t(a)}-\sqrt{\overline{p}_t(a)}\right)\sqrt{p_t(a)}\E_t[R_{t+1,a}|A_* = a]
    +
    \sum_{a\in\A}\left(\sqrt{p_t(a)}-\sqrt{\overline{p}_t(a)}\right)\sqrt{\overline{p}_t(a)}\E_t[R_{t+1,a}]\\
    \leq& 
    \sqrt{\sum_{a\in\A}\left(\sqrt{p_t(a)}-\sqrt{\overline{p}_t(a)}\right)^2}
    \left(\sqrt{\sum_{a\in\A} p_t(a)\left(\E_t[R_{t+1,a}|A_* = a]\right)^2} +  \sqrt{\sum_{a\in\A} \overline{p}_t(a)\left(\E_t[R_{t+1,a}]\right)^2}\right)\\
    \leq& \Hel(\overline{p}_t\| p_t) \left(\sqrt{\sum_{a\in\A} p_t(a)\left(\E_t\left[R_{t+1,a}^2|A_* = a\right]\right)} +  \sqrt{\sum_{a\in\A} \overline{p}_t(a)\left(\E_t\left[R_{t+1,a}^2\right]\right)}\right) \\
    =& \Hel(\overline{p}_t\| p_t)
    \left(\sqrt{\sum_{a\in\A} p_t(a)\E_t\left[R_{t+1,a}^2|A_* = a\right]} + \sqrt{\sum_{a\in\A} \overline{p}_t(a)\E_t\left[R_{t+1,a}^2|A_t = a\right]}\right)\\
    =&\Hel(\overline{p}_t\| p_t)
    \left(\sqrt{\E_t\left[R_{t+1,A_*}^2\right]} + \sqrt{\E_t\left[R_{t+1,A_t}^2\right]}\right)
\end{align*}   
where the first inequality applies the Cauchy–Schwarz inequality; the second inequality uses the definition of Hellinger distance and the Jensen's inequality; the second-to-last equality holds since $A_t$ is independent of $R_{t+1} = (R_{t+1,a})_{a\in\A}$.
Taking expectation of both sides, we have
\begin{align*}
    \E[D_t] 
    &\leq \E\left[\Hel(\overline{p}_t\| p_t)
    \sqrt{\E_t\left[R_{t+1,A_*}^2\right]}\right] + \E\left[\Hel(\overline{p}_t\| p_t)\sqrt{\E_t\left[R_{t+1,A_t}^2\right]} \right]\\
    &\leq \sqrt{ \E\left[\Hel^2(\overline{p}_t\| p_t)\right] }
    \left(\sqrt{\E\left[R_{t+1,A_*}^2\right]} + \sqrt{\E\left[R_{t+1,A_t}^2\right]}\right)\\
    &= \sqrt{ \E\left[\Hel^2(\overline{p}_t\| p_t)\right] }
    \left(\sqrt{\E\left[(A_*^\top\theta + W_{t+1,A_*})^2\right]} + \sqrt{\E\left[(A_t^\top\theta + W_{t+1,A_t})^2\right]}\right)\\
    &\leq\sqrt{ \E\left[\Hel^2(\overline{p}_t\| p_t)\right] }
    \left(\sqrt{\E\left[(A_*^\top\theta)^2 + \sigma^2\right]} + \sqrt{\E\left[(A_t^\top\theta)^2\right] + \sigma^2}\right) \\
    &\leq 2\sqrt{ \E\left[\Hel^2(\overline{p}_t\| p_t)\right] }
    \sqrt{\E\left[\max_{a\in\A}\left(a^\top\theta\right)^2\right] +  \sigma^2}
\end{align*}
where the second inequality uses the Cauchy–Schwarz inequality and tower property of conditional expectation, and the second-to-last inequality holds for the independent mean-zero (sub-)Gaussian noises with variance bounded by $\sigma^2$.
\end{proof}

\section{Completion of the proof of Theorem \ref{thm:ES regret}}
\label{sec:es_proof}

In this section, we complete the proof of Theorem \ref{thm:ES regret} by controlling the cumulative approximation error $\sum_{t=0}^{T-1} \sqrt{\E\left[\KL(\overline{p}_t\| p_t)\right]}$ in Corollary \ref{cor:ATS regret} for ensemble sampling, where
$\overline{p}_t$ is the sampling distribution of the action $A_t$ and $p_t$ is the posterior distribution of the optimal action $A_*$. 
It suffices to show Lemma \ref{lem:bound on expectation of KL} below for the per-timestep approximation error $\E[\KL(\overline{p}_t\| p_t)]$ since summing over timesteps yields the second term of the regret bound for ensemble sampling in Theorem \ref{thm:ES regret}.
\begin{lemma}
\label{lem:bound on expectation of KL}
Under ensemble sampling, for $t=0, 1,\ldots,T-1,$ 
\[
\E\left[\KL(\overline{p}_t\| p_t)\right] \leq \frac{K\log (6(t+1)M)}{M}.
\]
\end{lemma}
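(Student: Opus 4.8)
The plan is to reduce the statement to a multinomial ``empirical-versus-true'' estimate over the $K$ actions and then to control the only genuine source of mismatch, which is the ensemble's self-selection. Given $H_t$, because $A_t$ is obtained by drawing $m_t\sim\mathrm{unif}\{1,\dots,M\}$ and playing the greedy action of $\tilde{\theta}_{t,m_t}$, we have $\overline{p}_t(a)=\frac1M\sum_{m=1}^M\Prob(\argmax_{a'}a'^\top\tilde{\theta}_{t,m}=a\mid H_t)=\Prob(\argmax_{a'}a'^\top\tilde{\theta}_{t,1}=a\mid H_t)$, the last step by exchangeability of the $M$ models given $H_t$. Introducing the empirical greedy distribution $\hat{p}_t(a)=\frac1M\sum_{m=1}^M\ind[\argmax_{a'}a'^\top\tilde{\theta}_{t,m}=a]$, we get $\overline{p}_t=\E[\hat{p}_t\mid H_t]$, and since $\KL(\cdot\|p_t)$ is convex with $p_t$ measurable with respect to $H_t$, Jensen's inequality yields $\E[\KL(\overline{p}_t\|p_t)]\le\E[\KL(\hat{p}_t\|p_t)]$. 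It therefore suffices to bound the expected KL between the empirical distribution of the $M$ models' greedy actions and the posterior optimal-action distribution $p_t$.

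The next step is to show the models track the posterior up to self-selection. Writing $\phi_{t,m}=\tilde{\theta}_{t,m}-\mu_t$, the updates \eqref{eqn:gaussian posterior update} and \eqref{eqn:update M models} give the reward-free linear recursion $\phi_{t+1,m}=\Sigma_{t+1}\Sigma_t^{-1}\phi_{t,m}+\sigma^{-2}\Sigma_{t+1}A_t\tilde{W}_{t+1,m}$ with $\phi_{0,m}\sim N(0,\Sigma_0)$. A short induction that telescopes the covariances using $\Sigma_{t+1}^{-1}=\Sigma_t^{-1}+\sigma^{-2}A_tA_t^\top$ shows that for any \emph{fixed} action sequence $\phi_{t,m}\sim N(0,\Sigma_t)$, which is exactly the centered posterior law of $\theta-\mu_t$ given $H_t$; since the greedy action depends on $\tilde{\theta}_{t,m}$ only through $\argmax_a a^\top(\mu_t+\phi_{t,m})$, each model would reproduce $p_t$ exactly were the actions exogenous. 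The sole discrepancy is that the actions are endogenous: model $m$ steers $A_s$ precisely at the steps $s$ with $m_s=m$. Conditioning on the index pattern $(m_0,\dots,m_{t-1})$, the Gaussian randomness driving model $1$ has the posterior's driving law conditioned on the self-consistency event $E_1=\{A_s\text{ is greedy for }\tilde{\theta}_{s,1}\text{ for all }s\text{ with }m_s=1\}$, whose constraints live only at the $N_1:=\#\{s<t:m_s=1\}\sim\mathrm{Binomial}(t,1/M)$ selected steps; in particular, when $N_1=0$ model $1$ is exactly posterior-distributed.

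It remains to bound $\E[\KL(\hat{p}_t\|p_t)]$ over the $K$ categories. Were the models i.i.d.\ posterior samples this would be a standard multinomial estimate of order $K/M$, with the $\log(6(t+1)M)$ arising from small-probability actions: fixing a threshold $\epsilon\asymp 1/(6(t+1)M)$ and splitting $\A$ into heavy actions ($p_t(a)\ge\epsilon$) and light ones, the log-ratios $\log(\hat{p}_t(a)/p_t(a))$ on heavy actions are capped by $\log(1/\epsilon)=O(\log((t+1)M))$, and a deviation bound on the count $M\hat{p}_t(a)$ around $Mp_t(a)$ gives an $O(\log(6(t+1)M)/M)$ contribution per action, hence $O(K\log(6(t+1)M)/M)$ after summing, while the light actions carry negligible mass. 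I expect the genuine obstacle to be twofold: (i) the small-probability actions, where the log-ratio is unbounded and must be tamed by the truncation level $\epsilon$; and (ii) the lack of independence caused by self-selection, which biases each model's marginal away from $p_t$. Here the structural fact from the previous step is essential: conditioned on the index pattern, model $m$ is the posterior conditioned on the self-consistency polytope $E_m$, perturbing its marginal only by $O(1/M)$ and only at the $N_m$ selected steps. A naive data-processing bound that passes to the full parameter space would instead cost $\log(1/\Prob(E_m))\le N_m\log K$ and hence a prohibitive $O(t\log K/M)$, so the estimate must be carried out directly at the level of the $K$ action categories; averaging over the index pattern via $\E[N_m]=t/M$ and over $H_t$ then closes the argument.
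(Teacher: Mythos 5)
Your opening reduction is exactly the paper's: the identity $\overline{p}_t=\E_t[\hat{p}_t]$ together with convexity of $\KL(\cdot\|p_t)$ is Lemma~\ref{lem:bound on approximation error}, and your reward-free recursion for $\phi_{t,m}=\tilde{\theta}_{t,m}-\mu_t$ re-derives the structural fact the paper imports from Lemma~1 of \citet{NIPS2017_49ad23d1}: under a \emph{fixed} action sequence, the $M$ models are i.i.d.\ draws from the posterior, so $\hat{p}_t^{a_{0:t-1}}$ is an empirical measure of i.i.d.\ samples from $p_t^{a_{0:t-1}}$. Up to that point you track the paper faithfully.

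After that, there is a genuine gap at the heart of the argument. The paper never attempts to characterize the conditional law of the models given the \emph{realized} history; it sidesteps endogeneity entirely by bounding the tail of $\max_{a_{0:t-1}\in\A^t}\KL(\hat{p}_t^{a_{0:t-1}}\|p_t^{a_{0:t-1}})$ over all deterministic sequences --- Sanov's theorem gives $(M+1)^K e^{-M\epsilon}$ per sequence, a union bound over the at most $(t+1)^K$ action-\emph{count} classes gives Lemma~\ref{lem:one part of lemma 4 in ES}, the realized-sequence divergence is dominated pathwise by this maximum (Lemma~\ref{lem:the other part of lemma 4 in ES}), and integrating the tail with the optimized threshold $\delta^*=K[\log(t+1)+\log(M+1)]/M$ produces the stated bound. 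Your proposal replaces this with a direct analysis of each model's law conditioned on $H_t$ and the index pattern, asserting that conditioning on the self-consistency event $E_m$ perturbs model $m$'s marginal ``only by $O(1/M)$,'' and that ``averaging over the index pattern \ldots closes the argument.'' That assertion is unproved and is precisely the difficulty: conditioning on even one self-consistency constraint can shift a model's greedy-action law by as much as $\log K$ nats (your own naive computation yields the prohibitive $N_m\log K$), and declaring that the estimate ``must be carried out directly at the level of the $K$ action categories'' supplies no mechanism by which the action-level bias becomes small --- no such mechanism exists in the paper either, because the paper's proof does not need one. Relatedly, you misattribute the source of the logarithm: for genuinely i.i.d.\ posterior samples one has $\E[\KL(\hat{p}_t\|p_t)]\le\sum_a \mathrm{Var}(\hat{p}_t(a))/p_t(a)=(K-1)/M$ via the $\chi^2$ upper bound on KL, with no log factor and certainly no $t$-dependence; the $\log(6(t+1)M)$ in Lemma~\ref{lem:bound on expectation of KL} is the price of the union bound over $(t+1)^K$ count histories and Sanov's $(M+1)^K$ prefactor, i.e.\ it is exactly the cost of the endogeneity your sketch leaves unresolved. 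As written, the proposal cannot be completed without importing an idea equivalent to the paper's uniform-over-histories argument.
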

\begin{proof}
We first discuss the behavior of ensemble sampling.
Ensemble sampling uniformly samples a model $m\in\{1,\ldots,M\}$, 
and then chooses the action $A_t$ uniformly from the set $\tilde{\A}_{t,m}\triangleq {\arg \max}_{a\in\A}\, a^\top \tilde{\theta}_{t,m}$. 
We define the following approximation of $p_t(a)$:
\[
\hat{p}_t(a) \triangleq \frac{1}{M } \sum_{m\in[M]} \frac{1}{|\tilde{\A}_{t,m}|}\mathbbm{1}\left\{a \in \tilde{\A}_{t,m}\right\}
\]
where $|\tilde{\A}_{t,m}|$ is the cardinality of the set $\tilde{\A}_{t,m}$ (with probability one, $|\tilde{\A}_{t,m}|=1$).
At timestep $t$, ES would sample an action from the approximation $\hat{p}_t$. 
Notice that the history $H_t$ does not include independent random perturbation $\tilde{W}_{t} = (\tilde{W}_{t,m})_{m\in[M]}\sim N(0,\sigma^2I)$, which is used in updating $M$ models in Equation \eqref{eqn:update M models}. Therefore, given history $H_t$,
$\hat{p}_t(a)$ is still a random variable, and its expectation is the conditional probability of sampling action $a$, i.e.,
$
\E_t[\hat{p}_t(a)]=\overline{p}_t(a).
$
By convexity of KL divergence,
the per-timestep approximation error 
\[
    \KL(\overline{p}_t\| p_t) 
   = \KL\left(\E_t\left[\hat{p}_t\right]\| p_t\right) 
    \leq \E_t\left[\KL\left(\hat{p}_t\| p_t\right)\right].
\]
Taking expectation of both sides over $H_t$ gives
$\E\left[\KL(\overline{p}_t\| p_t)\right] \leq \E\left[\KL\left(\hat{p}_t\| p_t\right)\right]$.
Next we upper bound $\E\left[\KL\left(\hat{p}_t\| p_t\right)\right]$
by first writing it in terms of the cumulative distribution function:
\begin{equation}
\label{eqn:expectation and cdf}
\E\left[\KL\left(\hat{p}_t\| p_t\right)\right] = \int_0^{\infty} \Prob\left(\KL\left(\hat{p}_t\| p_t \right)>\epsilon  \right)\mathrm{d}\epsilon.
\end{equation}
Then we use the following result to upper bound $\Prob\left(\KL\left(\hat{p}_t\| p_t \right)>\epsilon  \right)$ in the above integral.
\begin{lemma}[Lemma 4 in \citet{NIPS2017_49ad23d1}]
\label{lem:Lemma 4 in original ES paper}
Let $\epsilon > 0$. For $t=0,1,\ldots,T-1,$
\[
\Prob\left(\KL\left(\hat{p}_t\| p_t\right)>\epsilon  \,|\,\theta\right) 
\leq 
(t+1)^K(M+1)^K e^{-M\epsilon}.
\]
\end{lemma}

Taking expectation of both sides over $\theta$ 
gives the same upper bound on $\Prob\left(\KL\left(\hat{p}_t\| p_t\right)>\epsilon  \right)$,
and thus for any $\delta\geq 0$, the integral in Equation \eqref{eqn:expectation and cdf} can be decomposed and bounded as follows,
\begin{align*}
    \int_0^\infty \Prob\left(\KL\left(\hat{p}_t\| p_t\right)>\epsilon \right) \mathrm{d}{\epsilon}
    &= \int_0^\delta \Prob\left(\KL\left(\hat{p}_t\| p_t\right)>\epsilon \right) \mathrm{d}{\epsilon} + \int_\delta^\infty \Prob\left(\KL\left(\hat{p}_t\| p_t\right)>\epsilon \right) \mathrm{d}{\epsilon}\\ 
    &\leq \delta + (t+1)^K(M+1)^K \int_{\delta}^\infty e^{-M\epsilon} \mathrm{d}{\epsilon} \\
    &= \delta + \frac{(t+1)^K(M+1)^K e^{-M\delta}}{M}.
\end{align*}
Taking derivative of RHS gives optimal $\delta^* = \frac{K\left[\log(t+1) + \log(M+1)\right]}{M}$, and then plugging in $\delta^*$ yields
\[
\int_0^\infty \Prob\left(\KL\left(\hat{p}_t\| p_t\right)>\epsilon \right) \mathrm{d}{\epsilon} \leq \frac{K\left[\log(t+1) + \log(M+1)\right] + 1}{M} \leq \frac{K\log (6(t+1)M)}{M}.
\]
This completes the proof of Lemma \ref{lem:bound on expectation of KL}, and summing over timesteps gives Theorem \ref{thm:ES regret}.
\end{proof}

\section{Concluding remarks}
\label{sec:conclusion}
We present the first rigorous analysis for ensemble sampling, an approximate Thompson sampling method. ES maintains an ensemble of statistically plausible models that can be updated in an efficient incremental manner.
We develop the regret bound for ES in Theorem~\ref{thm:ES regret} for linear bandits, which approaches the regret bound for TS as the ensemble size increases.
The linear bandit problem serves only as a sanity check,
and as discussed in Section~\ref{sec:related_work}, ES has been developed for broader and more challenging bandit and reinforcement learning problems. 
It is noteworthy that in many practical problems, ES with a moderate ensemble size (e.g. $\leq 30$ models) outperforms other state-of-the-art benchmarks \citep{NIPS2017_49ad23d1, lu2018efficient, osband2019deep}.

As a stepping stone, we also establish a general regret bound in Theorem~\ref{thm:ATS regret} that applies to {\it any} learning algorithm for linear bandits.
This general regret bound may be of independent interest, since it is particularly useful for analyzing other varieties of approximate TS. 
One only need to bound the (cumulative) approximation error measured by the Hellinger distance between the action-selection distribution specified by the algorithm and the posterior distribution of the optimal action.

Finally, Theorem~\ref{thm:ATS regret} is established by bounding the information ratios for linear bandits, and we believe it could be generalized to other bandit and online learning problems as long as the corresponding information ratios can be appropriately bounded. These problems include generalized linear bandits, combinatorial semi-bandits, online learning with full information feedback, and problems with pure exploration objectives.
We also conjecture that the results of this paper can be extended to episodic reinforcement learning, but leave these extensions to future work.

\newpage
\bibliography{references}
\bibliographystyle{plainnat}

\newpage

\appendix

\begin{center}
    {\huge \textbf{Appendices}}
\end{center}
\vspace{0.5cm}

\section{An upper bound  $O(\sqrt{\min\{d, \log K\}})$ on $\eta$ in Theorem \ref{thm:ES regret}}
\label{sec:technical}

In Equation \eqref{eqn:iota and eta}, we define  
\[
\eta = 2\sqrt{\E\left[\max_{a\in \A} \left(a^\top \theta\right)^2\right] + \sigma^2}
\] 
where the expectation is taken with respect to the prior distribution over the coefficient vector $\theta$.
In the following result, we derive an upper bound on $\eta$ for
$\theta\sim N(\mu_0,\Sigma_0)$ by using that the expectation of the maximum of $K$ squares of (potentially correlated) (sub-)Gaussian random variables is $O(\log K)$. 
Note that this upper bound also holds for any sub-Gaussian prior distribution.

\begin{lemma}
\label{lem:bound on eta under Gaussian prior}
The following upper bound on $\eta$ holds:
\begin{align*}
\eta &\leq 2
\sqrt{\min\left\{
d\cdot\max_{a\in\A}\|a\|^2\max_{i\in[d]}\left(\mu_{0,i}^2 + \Sigma_{0,i,i}\right),
\left(4\log K+5\right)\max_{a\in\A}a^\top\Sigma_0 a + \max_{a\in \A}\left(a^\top\mu_0 \right)^2
\right\} +  \sigma^2}\\
&= O(\sqrt{\min\{d, \log K\}}),
\end{align*}
where for any $i\in[d]$, $\mu_{0,i}$ is the $i$-th element of $\mu_0$ and $\Sigma_{0,i,i}$ is the $i$-th diagonal element of $\Sigma_0$.
\end{lemma}

\begin{proof}
Using the Cauchy-Schwarz inequality, we have
\[
    \E\left[\max_{a\in\A}\left(a^\top\theta\right)^2\right]
    \leq \max_{a\in\A}\|a\|^2\E\left[\|\theta\|^2\right]
    = \max_{a\in\A}\|a\|^2\sum_{i\in[d]}\E\left[\theta_i^2\right]
    \leq d\cdot\max_{a\in\A}\|a\|^2\max_{i\in[d]}\left(\mu_{0,i}^2 + \Sigma_{0,i,i}\right)
\]
where $\theta_i$ is the $i$-th element in $\theta$, which follows $N(\mu_{0,i},\Sigma_{0,i,i})$. 

On the other hand, since $a^\top\theta  \sim N\left(a^\top\mu_0, a^\top\Sigma_0 a + \sigma^2\right)$ for any $a\in \A$, applying Lemma \ref{lem:max of sub-Gaussian squared} below gives
\[
\E\left[\max_{a\in \A} \left(a^\top \theta\right)^2\right] \leq \left(4\log K+5\right)\max_{a\in\A}a^\top\Sigma_0 a + \max_{a\in \A}\left(a^\top\mu_0 \right)^2.
\]
This completes the proof.
\end{proof}

\subsection{Expectation of maximum of squares of sub-Gaussian random variables}
We upper bound the expectation of maximum of squares of sub-Gaussian random variables.

\begin{lemma}
\label{lem:max of sub-Gaussian squared}
For any $i\in[K]$, let $X_i$ be a sub-Gaussian random variable with mean $\nu_i$ and variance proxy $\sigma_i^2$. The following inequality holds:
\[
\E\left[\max_{i\in[K]} X_i^2\right] \leq \left(4\log K+5\right)\max_{i\in[K]}\sigma_i^2 + \max_{i\in[K]}\nu_i^2.
\]
\end{lemma}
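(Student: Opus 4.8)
The plan is to control $\E[\max_a X_a^2]$ through a Chernoff-type maximal inequality, for which the only real input is a moment generating function (MGF) bound for the \emph{squared} variables. Writing $X_a = Y_a + \mu_a$ with $Y_a$ centered and sub-Gaussian with variance proxy $\sigma_a^2$, and abbreviating $\sigma_\star^2 = \max_a \sigma_a^2$ and $\mu_\star^2 = \max_a \mu_a^2$, I would first establish that for every $s \in (0, 1/(2\sigma_\star^2))$,
\[
\E\!\left[e^{s X_a^2}\right] \;\leq\; \frac{1}{\sqrt{1 - 2 s \sigma_a^2}}\,\exp\!\left(\frac{s\mu_a^2}{1 - 2 s\sigma_a^2}\right).
\]
The cleanest route to this is the Gaussian-smoothing identity $e^{s x^2} = \E_{g}\big[e^{\sqrt{2s}\,g\,x}\big]$ for $g \sim N(0,1)$: applying it with $x = X_a$, swapping the order of expectation, and invoking the sub-Gaussian MGF $\E[e^{\lambda X_a}] \le \exp(\lambda\mu_a + \lambda^2\sigma_a^2/2)$ with $\lambda = \sqrt{2s}\,g$ reduces everything to a one-dimensional Gaussian integral in $g$, which evaluates to the displayed bound (the same expression as the MGF of a non-central $\chi^2_1$).

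Next I would feed this into the standard inequality $\E[\max_a Z_a] \le \tfrac{1}{s}\log\big(\sum_a \E[e^{s Z_a}]\big)$ with $Z_a = X_a^2$. Since the right-hand side of the MGF bound is monotone in both $\sigma_a^2$ and $\mu_a^2$, I can replace each by its maximum, obtaining
\[
\E\!\left[\max_a X_a^2\right] \;\leq\; \frac{1}{s}\left[\log K - \tfrac12\log(1 - 2 s\sigma_\star^2) + \frac{s\mu_\star^2}{1 - 2 s\sigma_\star^2}\right].
\]
Choosing $s = 1/(4\sigma_\star^2)$ makes $1 - 2 s \sigma_\star^2 = \tfrac12$, and the bracket collapses to $4\sigma_\star^2\log K + (2\log 2)\sigma_\star^2 + 2\mu_\star^2$; absorbing $2\log 2 < 5$ into the additive constant would then produce the $4\log K + 5$ coefficient on $\sigma_\star^2$.

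The hard part will be the mean (non-centrality) term. The Gaussian-smoothing bound only ever produces a factor $1/(1 - 2s\sigma_\star^2) > 1$ in front of $\mu_\star^2$, so the naive choice above lands at $2\mu_\star^2$ rather than the stated $\mu_\star^2$; the same doubling reappears if one instead starts from the elementary split $X_a^2 \le (1+c)Y_a^2 + (1+1/c)\mu_a^2$, or from a tail-integral argument $\E[\max_a X_a^2] = \int_0^\infty 2t\,\Prob(\max_a|X_a|>t)\,dt$ split at a threshold near $\mu_\star + \sigma_\star\sqrt{2\log K}$, whose square generates a cross term of order $\mu_\star\sigma_\star\sqrt{\log K}$. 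Pinning the coefficient of $\mu_\star^2$ down requires carefully trading the non-centrality contribution against the slack in the additive $\sigma_\star^2$ constant (e.g.\ by letting $s \to 0$ at a controlled rate, or by a weighted Young's inequality that routes the cross term into the $\log K$ part), and making this balance close while keeping the $\log K$ coefficient at $4$ is the step I expect to demand the most care. Throughout, I would use the two degenerate regimes as sanity checks: $\sigma_a \equiv 0$ forces $\E[\max_a X_a^2] = \mu_\star^2$ exactly, while $\mu_a \equiv 0$ must reproduce the centered estimate $\E[\max_a Y_a^2] \lesssim 4\sigma_\star^2\log K$.
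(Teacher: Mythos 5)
Your proposal follows the same skeleton as the paper's proof (Jensen/Chernoff exponentiation, bounding the max by the sum of MGFs, then optimizing the exponent parameter), but it swaps in a different MGF ingredient --- the exact non-central-$\chi^2$-type bound obtained by Gaussian smoothing --- and, as you candidly note, it does not actually reach the stated inequality: you land at $4\sigma_\star^2\log K + (2\log 2)\sigma_\star^2 + 2\mu_\star^2$ and defer the reduction of $2\mu_\star^2$ to $\mu_\star^2$ to a ``careful trading'' step. That deferred step is not merely delicate; it is impossible within your framework. Your MGF bound always produces the mean term as $\mu_\star^2/(1-2s\sigma_\star^2)$, so its coefficient exceeds $1$ for every $s>0$, and driving it to $1$ forces $s\to 0$, which blows up $\log K/s$. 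Optimizing the trade-off leaves a cross term of order $\mu_\star\sigma_\star\sqrt{\log K}$, and no weighted Young/AM--GM step can absorb such a term into $(4\log K+5)\sigma_\star^2 + \mu_\star^2$ once $\mu_\star/\sigma_\star$ is large: any split $2ab\le \epsilon b^2 + a^2/\epsilon$ leaves a coefficient $1+\epsilon>1$ on $\mu_\star^2$. So, as written, the proposal has a genuine gap.

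The deeper point, however, is that the gap cannot be closed by anyone, because the lemma as stated is false; the obstruction you kept hitting is real. Take $K=2$ with $X_1\sim N(\mu,\sigma^2)$ and $X_2\sim N(-\mu,\sigma^2)$ independent, so that $\max_a\sigma_a^2=\sigma^2$ and $\max_a\mu_a^2=\mu^2$. Since $X_1-X_2\sim N(2\mu,2\sigma^2)$ and $X_1+X_2\sim N(0,2\sigma^2)$ are independent,
\begin{equation*}
\E\Big[\max_a X_a^2\Big]
= \mu^2+\sigma^2+\tfrac12\,\E\big|X_1-X_2\big|\,\E\big|X_1+X_2\big|
\;\ge\; \mu^2+\sigma^2+\frac{2\mu\sigma}{\sqrt{\pi}},
\end{equation*}
which exceeds $(4\log 2+5)\sigma^2+\mu^2$ as soon as $\mu\gtrsim 6\sigma$ (e.g.\ $\mu=10$, $\sigma=1$ gives roughly $112.3$ against the claimed bound of roughly $107.8$). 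A cross term of order $\mu_\star\sigma_\star$ is unavoidable, exactly as your tail-integral heuristic suggested; the honest conclusions are either your bound with $2\mu_\star^2$, or $\mu_\star^2 + O\big(\sigma_\star^2\log K + \mu_\star\sigma_\star\sqrt{\log K}\big)$. The paper's own proof appears to avoid the doubling only because it invokes the quoted bound $\E\big[e^{t(X^2-\E[X^2])}\big]\le e^{16t^2\sigma^4}$ (Lemma~\ref{lem:sub-Gaussian squared}) for sub-Gaussian variables \emph{with nonzero mean}, so that the mean enters solely through $\E[X_a^2]\le\sigma_a^2+\mu_a^2$ with coefficient one. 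Under the standard definition of variance proxy that inequality is true for centered variables but false otherwise: for $X\sim N(\mu,\sigma^2)$ and $t=1/(4\sigma^2)$ one computes $\E\big[e^{t(X^2-\E[X^2])}\big]=\sqrt{2}\,e^{-1/4}e^{\mu^2/(4\sigma^2)}$, which exceeds $e^{16t^2\sigma^4}=e$ once $\mu\ge 2\sigma$ --- and this is precisely the regime in which the paper applies it (Corollary~\ref{cor:bound on eta under Gaussian prior} uses $X_a=a^\top\theta$ with means $a^\top\mu_0$ that need not be small relative to $a^\top\Sigma_0 a$). So your sanity-check instinct was sound: the difficulty with the $\mu_\star^2$ coefficient is not a technical nuisance of your method but a symptom that the statement, and the paper's proof of it, are both incorrect as stated.
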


\begin{proof} 
For $\lambda>0$, 
\[
      \exp\left( \lambda  \E\left[\max_{i\in[K]} X_i^2\right] \right)
      \le \E\left[ \exp \left( \lambda \max_{i\in[K]} X_i^2 \right)\right]
      = \E\left[ \max_{i\in[K]} \exp(\lambda X_i^2)\right]
      \le \sum_{i\in[K]} \E[\exp(\lambda X_i^2)]
\]
where the first inequality uses the Jensen's inequality.
For $\lambda \in \left(0, \frac{1}{4\max_{i\in[K]}\sigma_i^2}\right]$, by applying Lemma~\ref{lem:sub-Gaussian squared} (Appendix B in \cite{pmlr-v33-honorio14}) below to the RHS above, we have
\[
\exp\left( \lambda \E\left[\max_{i\in[K]} X_i^2\right] \right) 
\leq \sum_{i\in[K]} \exp(16\lambda^2\sigma_i^4 + \lambda\E[X_i^2])
      \leq K\exp\left(16\lambda^2\max_{i\in[K]} \sigma_i^4 + \lambda\max_{i\in[K]}\E[X_i^2]\right),
\]
and then taking logarithm of both sides yields
\[
\E\left[\max_{i\in[K]} X_i^2\right] \leq \frac{1}{\lambda}\log K + 16\lambda\max_{i\in[K]}\sigma_i^4 + \max_{i\in[K]}\E[X_i^2].
\]
Minimizing the RHS above over $\lambda \in \left(0, \frac{1}{4\max_{i\in[K]}\sigma_i^2}\right]$ gives optimal $\lambda^*= \frac{1}{4\max_{i\in[K]}\sigma_i^2}$, and then plugging in $\lambda^*$ yields
\begin{align*}
\E\left[\max_{i\in[K]} X_i^2\right] 
&\leq 4\left(\log K+1\right)\max_{i\in[K]}\sigma_i^2 + \max_{i\in[K]}\E[X_i^2]\\
&\leq (4\log K + 5)\max_{i\in[K]}\sigma_i^2 + \max_{i\in[K]}\nu_i^2
\end{align*}
where the last inequality uses
\[
\max_{i\in[K]}\E[X_i^2] \leq \max_{i\in[K]}\left[\mathrm{Var}(X_i) + \nu_i^2\right] \leq \max_{i\in[K]}\sigma_i^2 + \max_{i\in[K]}\nu_i^2.
\]
This completes the proof.
\end{proof}

\begin{lemma}[\rm Appendix B in \cite{pmlr-v33-honorio14}]
\label{lem:sub-Gaussian squared}
Let $X$ be a sub-Gaussian random variable with mean $\nu$ and variance proxy $\sigma^2$. For $|\lambda|\leq \frac{1}{4\sigma^2}$,
\[
\E\left[e^{\lambda\left(X^2-\E[X^2]\right)}\right] \leq e^{16\lambda^2\sigma^4}.
\]
\end{lemma}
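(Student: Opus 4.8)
The plan is to expand the moment generating function of $X^2$ as a power series in $t$, peel off the constant and linear terms, and control the remaining tail by a geometric series; multiplying by $e^{-t\E[X^2]}$ at the end performs the centering. The only genuine input is a sharp bound on the even moments of a sub-Gaussian variable, and the range $|t|\le\frac{1}{4\sigma^2}$ is engineered so that the tail geometric series collapses to exactly $16t^2\sigma^4$.

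First I would record the even-moment bound. From the sub-Gaussian tail $\Pr(|X|>s)\le 2e^{-s^2/(2\sigma^2)}$, the layer-cake identity $\E[X^{2k}]=\int_0^\infty 2k\,s^{2k-1}\Pr(|X|>s)\,\mathrm{d}s$ together with the substitution $u=s^2/(2\sigma^2)$ and $\int_0^\infty u^{k-1}e^{-u}\,\mathrm{d}u=(k-1)!$ yields $\E[X^{2k}]\le 2\,k!\,(2\sigma^2)^k$ for every integer $k\ge 1$. This is the step that genuinely uses the sub-Gaussian hypothesis, and it is the regime in which the mean-free bound $e^{16t^2\sigma^4}$ is available (it is cleanest, and the constant is honest, precisely when $X$ is treated as centered).

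Next I would write, for $|t|<1/(2\sigma^2)$ so that the series converges absolutely,
\[ \E[e^{tX^2}]=\sum_{k\ge 0}\frac{t^k}{k!}\E[X^{2k}]=1+t\,\E[X^2]+\sum_{k\ge 2}\frac{t^k}{k!}\E[X^{2k}]. \]
The first two terms are exactly $1+t\,\E[X^2]$ irrespective of the sign of $t$, so everything reduces to the tail. Applying the moment estimate gives
\[ \Big|\sum_{k\ge 2}\frac{t^k}{k!}\E[X^{2k}]\Big|\le \sum_{k\ge 2} 2\,(2|t|\sigma^2)^k=\frac{2(2|t|\sigma^2)^2}{1-2|t|\sigma^2}, \]
a geometric series of ratio $2|t|\sigma^2$. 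The constraint $|t|\le 1/(4\sigma^2)$ is chosen exactly so that this ratio is at most $1/2$; then $1/(1-2|t|\sigma^2)\le 2$ and the tail is bounded by $4(2|t|\sigma^2)^2=16t^2\sigma^4$.

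Finally I would combine the pieces: $\E[e^{tX^2}]\le 1+t\,\E[X^2]+16t^2\sigma^4\le \exp\!\big(t\,\E[X^2]+16t^2\sigma^4\big)$, the last inequality from $1+x\le e^x$. Multiplying by $e^{-t\E[X^2]}$ cancels the linear term and leaves $\E[e^{t(X^2-\E[X^2])}]\le e^{16t^2\sigma^4}$, valid for both signs of $t$ in the stated range. The main obstacle here is quantitative rather than conceptual: one must line up the even-moment constant with the admissible range of $t$ so that the geometric tail telescopes to \emph{exactly} $16t^2\sigma^4$. A looser moment bound, or the alternative decoupling (Gaussian-integral) route that produces a factor $(1-2t\sigma^2)^{-1/2}$, fails to deliver this, since the logarithm of such a factor carries a spurious linear-in-$t$ contribution that the clean mean-free constant does not tolerate.
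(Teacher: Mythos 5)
The paper itself never proves this lemma: it is imported, with attribution, from Appendix B of \citet{pmlr-v33-honorio14}, so there is no internal proof to compare against. Your argument --- even-moment bounds from the sub-Gaussian tail, Taylor expansion of the moment generating function, and a geometric-series estimate calibrated so that $|t|\le\frac{1}{4\sigma^2}$ yields exactly $16t^2\sigma^4$ --- is the standard proof of this result, and every step of it is correct \emph{when $X$ has mean zero}.

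The gap is the mean. Your opening tail bound $\Prob(|X|>s)\le 2e^{-s^2/(2\sigma^2)}$ follows from the sub-Gaussian hypothesis only when $\mu=0$; for $\mu\neq 0$ the hypothesis controls $X-\mu$, not $X$, and the moment bound $\E[X^{2k}]\le 2\,k!\,(2\sigma^2)^k$ fails (for $X\sim N(\mu,\sigma^2)$ with $|\mu|$ large, $\E[X^{2k}]\approx\mu^{2k}$). You flag this yourself (``when $X$ is treated as centered''), but the lemma as stated asserts the inequality for an arbitrary mean $\mu$, and the paper later applies it to noncentered variables in the proof of Lemma \ref{lem:max of sub-Gaussian squared}, where $X_a$ has mean $\mu_a$. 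So, read literally, your proof does not establish the stated lemma. However, this gap cannot be filled, because the statement with nonzero mean is false: for $X\sim N(\mu,\sigma^2)$, which is sub-Gaussian with mean $\mu$ and variance proxy $\sigma^2$, a direct computation gives
\[
\E\left[e^{t\left(X^2-\E[X^2]\right)}\right]
=\frac{e^{-t\sigma^2}}{\sqrt{1-2t\sigma^2}}\,
\exp\left(\frac{2t^2\mu^2\sigma^2}{1-2t\sigma^2}\right),
\]
which at $t=\frac{1}{4\sigma^2}$ equals $\sqrt{2}\,e^{-1/4}\,e^{\mu^2/(4\sigma^2)}$ and exceeds the claimed bound $e^{16t^2\sigma^4}=e$ as soon as $|\mu|\ge 2\sigma$. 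In other words, yours is the correct proof of the correct (mean-zero) statement; the defect sits in the lemma's phrasing and in its downstream use, where the cross term $2\mu(X-\mu)$ must appear in the exponent. For instance, writing $X^2=(X-\mu)^2+2\mu(X-\mu)+\mu^2$ and applying Cauchy--Schwarz together with your centered bound gives $\E[e^{t(X^2-\E[X^2])}]\le e^{32t^2\sigma^4+4t^2\mu^2\sigma^2}$ for $|t|\le\frac{1}{8\sigma^2}$, which is the kind of $\mu$-dependent estimate that Lemma \ref{lem:max of sub-Gaussian squared} actually requires (at the cost of slightly worse constants in Theorem \ref{thm:ES regret}).
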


\section{Lemmas and facts from \citet{JMLR:v17:14-087}}
\label{sec:proofs}

We use the following lemmas and facts from \citet{JMLR:v17:14-087} in the proof of Theorem \ref{thm:ATS regret}.

\begin{lemma}[\rm Lemma 3 in \citet{JMLR:v17:14-087}]\label{lem: DME to DKL under sub--Gaussian noise}
Suppose there is a $H_t$ measurable random variable $\gamma$ such that for any $a\in\A$, $R_{t+1,a}$ is $\gamma$ sub-Gaussian conditional on $H_t$. Then for any $t=0,1,\ldots$ and $a, a_* \in \A$, with probability one,
\[
\E_{t}[R_{t+1,a}| A_*=a_*] -\E_{t}[R_{t+1,a}] \leq \gamma \sqrt{2 \KL \left(\Prob_{t}(R_{t+1,a}\in\cdot|A_*=a_*) \| \Prob_{t}(R_{t+1,a}\in\cdot)\right)}.
\]
\end{lemma}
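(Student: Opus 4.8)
The plan is to establish the inequality via the Donsker--Varadhan variational representation of KL divergence combined with the sub-Gaussian moment generating function bound. Throughout, I would fix $t$, $a$, and $a_*$ and work conditionally on $H_t$; write $P \triangleq \Prob_t(R_{t+1,a} \in \cdot \mid A_* = a_*)$ and $Q \triangleq \Prob_t(R_{t+1,a} \in \cdot)$ for the two distributions in the KL term, and set $\Delta \triangleq \E_t[R_{t+1,a} \mid A_* = a_*] - \E_t[R_{t+1,a}]$, the quantity to be bounded. If $\Delta \leq 0$ the claim is immediate, since the right-hand side is non-negative, so I may assume $\Delta > 0$.

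First I would recall the Donsker--Varadhan lower bound: for any measurable $f$ with finite $Q$-exponential moment, $\KL(P \| Q) \geq \E_P[f] - \log \E_Q[e^{f}]$. Applying this with the linear test function $f(x) = \lambda x$ for an arbitrary $\lambda > 0$ gives
\[
\KL(P \| Q) \geq \lambda\, \E_t[R_{t+1,a} \mid A_* = a_*] - \log \E_t[e^{\lambda R_{t+1,a}}].
\]
Next I would invoke the hypothesis that $R_{t+1,a}$ is $\gamma$ sub-Gaussian conditional on $H_t$ --- that is, under $Q$ --- to control the log-moment-generating function. The sub-Gaussian property yields $\log \E_t[e^{\lambda(R_{t+1,a} - \E_t[R_{t+1,a}])}] \leq \lambda^2 \gamma^2/2$, so that $\log \E_t[e^{\lambda R_{t+1,a}}] \leq \lambda\, \E_t[R_{t+1,a}] + \lambda^2\gamma^2/2$. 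Substituting into the previous display collapses the two expectations into $\Delta$ and produces, for every $\lambda > 0$,
\[
\KL(P \| Q) \geq \lambda \Delta - \frac{\lambda^2 \gamma^2}{2}.
\]

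Finally I would optimize the free parameter: the right-hand side is a concave quadratic in $\lambda$ maximized at $\lambda^\star = \Delta/\gamma^2$, giving $\KL(P \| Q) \geq \Delta^2/(2\gamma^2)$. Rearranging and taking square roots yields $\Delta \leq \gamma\sqrt{2\,\KL(P \| Q)}$, which is the claim.

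The step most requiring care is the application of the sub-Gaussian bound: I must verify that the moment generating function estimate is taken with respect to the correct (marginal, $H_t$-conditional) distribution $Q$, which is precisely the distribution sitting on the right of the KL divergence in Donsker--Varadhan, rather than the conditioned-on-$A_*$ distribution $P$. Beyond this, the only subtlety is measurability: since $\gamma$ is $H_t$-measurable, all the inequalities above hold almost surely conditionally on $H_t$, and the ``with probability 1'' qualifier in the statement is exactly this conditional assertion.
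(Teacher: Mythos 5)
Your proof is correct, and it is essentially the same argument as the paper's: the paper does not prove this lemma itself but imports it as Lemma 3 of \citet{JMLR:v17:14-087}, whose proof is exactly your combination of the Donsker--Varadhan variational lower bound (with a linear test function) and the sub-Gaussian moment generating function bound, optimized at $\lambda^\star = \Delta/\gamma^2$. Your two points of care are also the right ones: the MGF bound must be taken under the $H_t$-conditional marginal $Q$ (the right-hand argument of the KL divergence), and the $H_t$-measurability of $\gamma$ is what makes the conclusion an almost-sure conditional statement.
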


\begin{fact}[\rm Fact 5 in \citet{JMLR:v17:14-087}: chain rule of mutual information]
\label{fact: chain rule}
The mutual information between a random variable $X$ and a collection of random variables $(Z_1,\ldots,Z_T)$ can be written as,
\[
\I\left(X; (Z_1,\ldots,Z_T)\right) = \I\left( X; Z_1 \right)+\I\left( X; Z_2 | \, Z_{1}  \right)+\cdots+\I\left( X; Z_T | \, Z_{1},\ldots,Z_{T}  \right).
\]
\end{fact}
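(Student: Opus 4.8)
The plan is to reduce the $T$-fold identity to the two-variable chain rule $\I(X;(Y_1,Y_2)) = \I(X;Y_1) + \I(X;Y_2\mid Y_1)$ and then telescope. Writing $Z^{i} \triangleq (Z_1,\ldots,Z_i)$ with the convention that $Z^0$ is empty (so $\I(X;Z^0)=0$), the right-hand side of the claimed identity is $\sum_{i=1}^{T}\I(X;Z_i\mid Z^{i-1})$ (reading the final conditioning set as $Z^{T-1}=(Z_1,\ldots,Z_{T-1})$). It therefore suffices to show that each increment $\I(X;Z^{i}) - \I(X;Z^{i-1})$ equals the conditional term $\I(X;Z_i\mid Z^{i-1})$, which is exactly the two-variable rule applied with the grouping $Y_1 = Z^{i-1}$ and $Y_2 = Z_i$; summing over $i$ collapses the telescope to $\I(X;Z^{T})$.

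First I would establish the two-variable rule directly from the definitions in Section~\ref{sec:notation}. Using $\I(X;Y)=\KL(P_{X,Y}\,\|\,P_X P_Y)$ and assuming, as holds in our application (where $X=A_*$ has finite support and the $Z_i$ are action--reward pairs), that the joint law is absolutely continuous with respect to the relevant product measure, I would factor the density defining $\I(X;(Y_1,Y_2))$ as
\[
\frac{p(X,Y_1,Y_2)}{p(X)\,p(Y_1,Y_2)}
= \frac{p(X,Y_1)}{p(X)\,p(Y_1)}\cdot\frac{p(Y_2\mid X,Y_1)}{p(Y_2\mid Y_1)}.
\]
Taking logarithms and expectations splits the joint mutual information into two pieces: the first expectation is $\I(X;Y_1)$ by definition, and the second is $\I(X;Y_2\mid Y_1)$ once one recognizes that $p(Y_2\mid X,Y_1)/p(Y_2\mid Y_1)$ coincides with the density $p(X,Y_2\mid Y_1)/\bigl(p(X\mid Y_1)\,p(Y_2\mid Y_1)\bigr)$ whose expectation, averaged over $Y_1$, is precisely the conditional mutual information.

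With the two-variable rule in hand I would finish by induction on $T$: the base case $T=1$ is an identity, and the inductive step writes $\I(X;Z^{T}) = \I(X;Z^{T-1}) + \I(X;Z_T\mid Z^{T-1})$ (grouping $Z^{T-1}$ as a single variable) and then expands $\I(X;Z^{T-1})$ by the inductive hypothesis. I expect the only real obstacle to be measure-theoretic rather than combinatorial: for continuous $Z_i$ one must guarantee the existence of the regular conditional distributions and the absolute continuity needed for the density factorization and for integrability of the log-ratio. In the fully discrete case this can be bypassed via the entropy identity $\I(X;Y)=\H(X)-\H(X\mid Y)$, since then $\sum_{i=1}^{T}\bigl[\H(X\mid Z^{i-1})-\H(X\mid Z^{i})\bigr]$ telescopes directly to $\H(X)-\H(X\mid Z^{T})=\I(X;Z^{T})$; because the target variable $X=A_*$ here has finite support, one can in fact run this entropy telescoping using conditional entropies of the discrete $X$ given the (possibly continuous) $Z^{i}$, avoiding differential-entropy subtleties altogether.
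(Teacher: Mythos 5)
Your proof is correct; note, though, that the paper never proves this statement at all --- it is imported verbatim as Fact 5 of \citet{JMLR:v17:14-087} and used as a black box --- so the comparison here is between your derivation and a citation. What you supply is the standard self-contained argument: the two-variable identity $\I(X;(Y_1,Y_2))=\I(X;Y_1)+\I(X;Y_2\mid Y_1)$ obtained by factoring the density $p(X,Y_1,Y_2)/\bigl(p(X)\,p(Y_1,Y_2)\bigr)$, followed by induction with the grouping $Y_1=(Z_1,\ldots,Z_{i-1})$, $Y_2=Z_i$, and the factorization itself checks out. Three things you did are worth highlighting. First, you silently repaired two typos in the source: the final term of the stated identity must condition on $(Z_1,\ldots,Z_{T-1})$ rather than $(Z_1,\ldots,Z_T)$ (as literally written that term is zero, making the identity false), and the paper's definition of conditional mutual information should have $P(X\mid Z)P(Y\mid Z)$, not $P(X\mid Z)P(X\mid Z)$, as the second argument of the KL divergence --- your factorization uses the corrected form. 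Second, you correctly identified that the only genuine content of such a proof is measure-theoretic (existence of regular conditional distributions, absolute continuity, integrability of the log-ratios); the algebra is trivial once those are granted. Third, your closing entropy-telescoping variant is actually the argument best matched to this paper's own setup: its definitions of $\H(X\mid Y)$ in Section~\ref{sec:notation} are stated exactly for discrete $X$ and general $Y$, and in both places the fact is invoked (the proofs of Lemma~\ref{lem: linear} and Lemma~\ref{lem:sum of expected G_t}) one has $X=A_*$ with finite support and mixed discrete/continuous action--reward pairs as the $Z_i$, so the telescoping of $\H(A_*\mid Z^{i-1})-\H(A_*\mid Z^{i})$ applies with no density caveats and directly yields the identity $\sum_{t}\I\left(A_*;(A_t,R_{t+1,A_t})\mid H_t\right)=\H(A_*)-\H(A_*\mid H_T)$ used in the proof of Lemma~\ref{lem:sum of expected G_t}. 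In short: the citation buys the paper brevity; your proof buys self-containedness, and its discrete-$X$ branch is valid in precisely the regime where the paper deploys the fact.
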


\begin{fact}[\rm Fact 6 in \citet{JMLR:v17:14-087}: KL divergence form of mutual information]
\label{fact: mutual information to KL} 
The mutual information between random variables $X$ and $Y$ can be written as,
\begin{eqnarray*}
\I\left( X ; Y \right)  &=& \E_{X}\left[ \KL\left(\Prob(Y\in\cdot | X) \,\, || \,\, \Prob(Y\in\cdot) \right)  \right] \\
&=& \sum_{x\in \mathcal{X}}\Prob(X=x) \KL \left( \Prob(Y\in\cdot|X=x) \, ||\, \Prob(Y\in\cdot) \right).
\end{eqnarray*}
\end{fact}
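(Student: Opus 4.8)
The plan is to derive this identity directly from the definition of mutual information as a KL divergence in Equation~\eqref{eq: first mutual information definition}, namely $\I(X;Y) = \KL\big(P(X,Y)\,\|\,P(X)P(Y)\big)$. The idea is to disintegrate the joint law of $(X,Y)$ and factor the resulting Radon--Nikodym derivative, after which the mutual information collapses to an average of conditional KL divergences.

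First I would write out the KL divergence explicitly as
\[
\I(X;Y) = \int \log\frac{\mathrm{d}P(X,Y)}{\mathrm{d}\big(P(X)P(Y)\big)}\,\mathrm{d}P(X,Y).
\]
Next I would disintegrate the joint measure via the chain rule $\mathrm{d}P(X,Y) = \mathrm{d}P(Y\mid X)\,\mathrm{d}P(X)$, so that the Radon--Nikodym derivative simplifies:
\[
\frac{\mathrm{d}P(X,Y)}{\mathrm{d}\big(P(X)P(Y)\big)} = \frac{\mathrm{d}P(Y\mid X)\,\mathrm{d}P(X)}{\mathrm{d}P(Y)\,\mathrm{d}P(X)} = \frac{\mathrm{d}P(Y\mid X)}{\mathrm{d}P(Y)}.
\]
Substituting this back and integrating iteratively — first over $Y$ with $X=x$ held fixed, then over the marginal of $X$ (the tower property) — yields
\[
\I(X;Y) = \int_{\X}\left[\int \log\frac{\mathrm{d}P(Y\mid X=x)}{\mathrm{d}P(Y)}\,\mathrm{d}P(Y\mid X=x)\right]\mathrm{d}P(X=x).
\]
I would then recognize the inner integral as exactly $\KL\big(P(Y\mid X=x)\,\|\,P(Y)\big)$, which establishes the expectation form $\I(X;Y) = \E_X\big[\KL(P(Y\mid X)\,\|\,P(Y))\big]$. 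Since $X$ is discrete with finite support $\X$, rewriting the outer expectation over the marginal of $X$ as a finite sum delivers the second form $\sum_{x\in\X}\Prob(X=x)\,\KL\big(P(Y\mid X=x)\,\|\,P(Y)\big)$, completing the argument.

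The only real obstacle is measure-theoretic bookkeeping: the factorization of the Radon--Nikodym derivative and the disintegration both presuppose that $P(X,Y)$ is absolutely continuous with respect to $P(X)P(Y)$, for otherwise both sides equal $+\infty$ and the identity holds trivially. Granting absolute continuity, the existence of a regular conditional distribution $P(Y\mid X)$ and the validity of the disintegration are standard facts; in the discrete-$X$ setting relevant to this paper they are immediate, so the computation reduces to the elementary manipulation above.
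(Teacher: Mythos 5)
Your proof is correct, but note that the paper itself does not prove Fact~\ref{fact: mutual information to KL} at all: it is imported verbatim as Fact 6 of \citet{JMLR:v17:14-087}, so there is no in-paper argument to compare against. Your derivation --- expanding the definition in Equation~\eqref{eq: first mutual information definition}, cancelling $\mathrm{d}P(X)$ in the Radon--Nikodym derivative, and reducing the outer integral to a finite sum over the discrete support of $X$ --- is the standard proof of this identity, and you correctly handle the degenerate case: since $X$ is discrete, $P(X,Y)\ll P(X)P(Y)$ fails exactly when $P(Y\mid X=x)\ll P(Y)$ fails for some $x$ with $\Prob(X=x)>0$, so both sides are simultaneously $+\infty$ and the identity survives.
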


\begin{fact}[\rm Fact 10 in \citet{JMLR:v17:14-087}]
\label{fact: trace frobenius inequality}
For any matrix $M \in \mathbb{R}^{d\times d}$, 
\[
{\rm Trace}\left( M \right) \leq \sqrt{{\rm Rank}(M)}\| M\|_{\rm F}
\]
where $\| M\|_{\rm F}$ is the Frobenius norm of $M$.
\end{fact}

\end{document}